\documentclass[11pt]{article}

\usepackage[T1]{fontenc}
\usepackage[utf8]{inputenc}
\usepackage{lmodern}
\usepackage[margin=1in]{geometry}
\usepackage{amsmath,amssymb,amsfonts,amsthm,mathtools,bm}
\usepackage{booktabs,longtable,array,graphicx,float,calc,multirow}
\usepackage{enumitem}
\usepackage{algorithm}
\usepackage{algpseudocode}
\makeatletter
\renewcommand{\theHALG@line}{\thealgorithm.\arabic{ALG@line}}
\makeatother
\usepackage{microtype}
\usepackage[authoryear]{natbib}
\usepackage{hyperref}
\hypersetup{colorlinks=true,linkcolor=black,citecolor=black,urlcolor=blue}
\usepackage[nameinlink,capitalise]{cleveref}
\newtheorem{theorem}{Theorem}
\newtheorem{proposition}[theorem]{Proposition}
\newtheorem{corollary}[theorem]{Corollary}
\theoremstyle{definition}
\newtheorem{definition}[theorem]{Definition}

\begin{document}
\title{MOCA: A Transformer-based Modular Causal Inference Framework with One-way Cross-attention and Cutting Feedback}
\author{
Lei Wang\\
Department of Biostatistics and Informatics\\
Colorado School of Public Health\\
\texttt{lei.2.wang@cuanschutz.edu}
\and
Debashis Ghosh\\
Department of Biostatistics and Informatics\\
Colorado School of Public Health\\
\texttt{debashis.ghosh@cuanschutz.edu}
}
\date{}
\maketitle

\begin{abstract}
{Causal effect estimation from observational data requires careful adjustment for confounding. Classical estimators such as inverse probability weighting (IPW) and augmented inverse probability weighting (AIPW) perform well under favorable model specification but may become unstable when treatment assignment and outcome mechanisms are complex, nonlinear, or high-dimensional. Machine learning and representation-learning approaches provide greater flexibility; however, joint optimization can allow outcome-related information to influence treatment-side representations, potentially compromising the intended causal structure. We propose \textbf{MOCA} (\textbf{M}odular \textbf{O}ne-way \textbf{C}ausal \textbf{A}ttention), a transformer-based framework that separates treatment and outcome modeling through a modular architecture and performs confounder adjustment via one-way cross-attention. A cutting-feedback mechanism implemented through gradient detachment further prevents outcome loss from updating the treatment module. This design preserves the flow of directional information while retaining the representational power of transformer architectures for causal inference. Using a Markov-kernel formulation, MOCA learns an autonomous treatment representation and is predictively KL-optimal within its Gaussian model class. MOCA also recovers the true average treatment effect under correct specification and that additional two way feedback cannot further reduce the error. A conformal inference procedure for individual treatment effects is proposed. Across multiple simulated scenarios, MOCA demonstrated competitive or improved performance in average treatment effect estimation compared with IPW, AIPW, X-learner, TARNet, DragonNet, BART, Causal Forest, and Do\_PFN. We further evaluated the proposed method on the Infant Health and Development Program (IHDP) dataset and the Dehejia-Wahba (DW) dataset as real-world benchmarks. An ablation study was conducted to assess the contribution of individual architectural components and support the proposed design choices. Overall, the results suggest that modular attention with one-way information flow provides an effective and interpretable framework for causal inference using modern deep learning models.}

\end{abstract}
{Keywords: Causal inference; Cross-attention; Cutting feedback; Modular learning; Treatment effect estimation; Transformer.}

\newpage
\section{Introduction}
In observational studies, treatment assignment is typically not randomized, leading to the 
presence of confounders that affect both treatment assignment and
outcomes. Many modern causal methodologies are built on the Rubin
potential outcomes framework \citep{rubin2005}. A central quantity of interest 
is the average treatment effect (ATE). 

Classical approaches can perform well when the models are correctly
specified. Propensity score methods like inverse probability weighting
(IPW) \citep{robins2000} and doubly robust estimators such as augmented inverse
probability weighting (AIPW) \citep{bang2005} are widely used approaches for
confounding adjustment. However, parametric causal estimators often rely
on restrictive assumptions, and cannot capture complex patterns such as nonlinear effects, 
or high-dimensional covariates. To address these limitations and give conditional average treatment effect (CATE) estimates, machine learning-based approaches, such as the T-learner and
X-learner have been introduced \citep{kunzel2019}. 

Tree-based nonparametric methods, including Bayesian Additive Regression Trees (BART) \citep{chipman2010bart} and Causal Forests \citep{wager2018causalforest}, further improve flexibility by automatically modeling nonlinearities and interactions. However, these methods typically perform confounding adjustment implicitly through recursive partitioning or ensemble averaging, making the adjustment mechanism less transparent and more difficult to interpret. Besides, representation-learning approaches such as TARNet \citep{johansson2016} and DragonNet
\citep{shi2019} have embedded the problem of causal inference within a 
deep learning framework consisting of representation learning paired
with two potential outcome heads. Yet joint representation learning may allow for outcome information to
affect treatment representations, which is not desirable in causal estimation. 

More recently, tabular causal inference approaches such as Do\_PFN have used pretraining to perform causal effect estimation across diverse tasks \citep{robertson2026dopfn}. Their performance depends heavily on the quality and coverage of the pretraining distribution, and their behavior may be less predictable when applied to settings that differ substantially from the pretraining data.  

To address these challenges, we draw
inspiration from recent advances in transformer architectures and
cross-attention mechanisms ~\citep{vaswani2017} combined with a novel modular inferential framework. Transformers have
demonstrated remarkable ability to capture complex dependence structures
through attention-based representation learning, while cross-attention
provides a natural mechanism for directional information transfer
between modules. Building on these ideas, we propose a modular
causal modeling framework in this article.  

In our framework, the treatment branch is designed to learn
propensity-related representations, while the outcome branch receives
treatment-side information only through one-way cross-attention. This
two-step design reflects the causal role of treatment modeling as an
adjustment mechanism for confounding while respecting the fact that outcome information should not inform the treatment assignment process \citep{ImbensRubin2015}. We also incorporate a cutting-feedback 
strategy through gradient detachment \citep{VowelsEtAl2021}, preventing the outcome loss from 
propagating backward to affect the treatment module.

Our proposed method, \textbf{MOCA} (\textbf{Modular One-way Causal
Attention}), preserves treatment modeling independence while still allowing the outcome
module to adapt to complex response surfaces. In this way, the framework
retains modern machine learning advantages while maintaining the
structural requirements of causal inference and supporting interpretable
treatment effect estimation.

\section{Problem Setup and Causal Estimand}

We consider an observational dataset consisting of \(n\) independent
units: \(\left\{ (X_{i},T_{i},Y_{i})\}_{i = 1}^{n} \right.,
\) where \(X_{i} \in \mathbb{R}^{p}\) denotes the observed covariates,
\(T_{i} \in \{ 0,1\}\) is the binary treatment indicator, and \(Y_{i}\) is
the observed outcome. Under the potential outcomes framework, each unit
\(i\) has two potential outcomes, \(Y_{i}(0)\) and \(Y_{i}(1)\),
corresponding to the outcomes that would be observed under control and
treatment~\citep{rubin2005}. The following assumptions are made. 
\begin{enumerate}
	\item Consistency: the observed
outcome equals the corresponding potential outcome under the treatment
received: \(Y_{i} = Y_{i}\left( T_{i} \right)\), implying no other version of treatment;
\item Ignorability (unconfoundedness):
\((Y_{i}(0),Y_{i}(1))\bot T_{i} \mid X_{i}.\)  That is, conditional on
the observed covariates, treatment assignment is independent of the
potential outcomes, which implies no unobserved confounding;
\item 
Positivity:
\(0 < P\left( T_{i} = 1\mid X_{i} = x \right) < 1\text{ for all }x\text{ in the support of }X,\)
which ensures that each covariate profile has a positive probability of
receiving either treatment level.
\end{enumerate}

For a unit $i$ with covariate vector $X_i = x_i$, a natural goal is to evaluate its individual level treatment effect (ITE), \(\tau_i := Y_i(1) - Y_i(0).\) However, $\tau_i$ is fundamentally unobservable, since for each unit we only observe one potential outcome. 

We instead focus on the conditional average treatment effect (CATE), \(\tau(x) := \mathbb{E}\{Y(1)-Y(0)\mid X=x\}.\) Although $\tau(x)$ is a population-level quantity, it serves as the optimal target for individual-level prediction. In particular, let $\hat{\tau}(X_i)$ denote an estimator of $\tau_i$. The mean squared error (MSE) at $X_i = x_i$ can be decomposed as
\[
\mathbb{E}\Big[(\tau_i - \hat{\tau}(X_i))^2 \mid X_i = x_i \Big]
=
\mathbb{E}\Big[(\tau_i - \tau(x_i))^2 \mid X_i = x_i \Big]
+
\mathbb{E}\Big[(\tau(x_i) - \hat{\tau}(X_i))^2 \mid X_i = x_i \Big].
\]
The first term represents irreducible uncertainty, which cannot be reduced by any estimator. Therefore, minimizing the MSE for estimating $\tau_i$ is equivalent to minimizing the MSE for estimating $\tau(x_i)$. This justifies that the best estimator for CATE is also the best estimator for ITE. 

To quantify uncertainty in individual treatment effect estimates, we apply the conformal counterfactual inference framework \citep{lei2021conformal}. We assess uncertainty quantification using empirical coverage and average interval length for ITE in the simulation study [S5]. In brief, For each treatment arm $t\in\{0,1\}$, fit the outcome model
$\hat{\mu}_t(X)$ and compute calibration residuals \(R_{j,t}=\left|Y_j-\hat{\mu}_t(X_j)\right|.\) Let $q_{t,1-\alpha}$ denote the corresponding conformal quantile. The prediction interval for the missing potential outcome is
\[
C_{i,t}
=
\left[
\hat{\mu}_t(X_i)-q_{t,1-\alpha},
\;
\hat{\mu}_t(X_i)+q_{t,1-\alpha}
\right].
\]
Combining this interval with the observed outcome gives an individual treatment effect interval for\(\tau_i=Y_i(1)-Y_i(0),\) with empirical coverage evaluated against the true ITE in simulations.

For the average treatment effect (ATE),
\(\tau = \mathbb{E}\left\lbrack Y(1) - Y(0) \right\rbrack.\) In
practice, estimation requires learning the potential outcome
\(\mu_{0}(x) = \mathbb{E}\lbrack Y(0) \mid X = x\rbrack,\mu_{1}(x) = \mathbb{E}\lbrack Y(1) \mid X = x\rbrack,\)
so that \(\tau(x) = \mu_{1}(x) - \mu_{0}(x),\ \)and
\(\tau = \mathbb{E}(\tau(x))\).

\section{Proposed Method}

We propose a modular transformer-based architecture for causal effect
estimation that explicitly separates the treatment assignment mechanism
from the potential outcome modeling mechanism [Figure 1, Figure 2].  We now describe the steps of the algorithm. 

\subsection{Scalar Feature Tokenization}

First, each scalar covariate \(X_{ij}\) is embedded independently into a
\(d\)-dimensional token. For the \(j\)-th covariate of subject \(i\),
the initial tokenizer produces \(h_{ij}^{(0)} = W_{j}X_{ij} + b_{j},h_{ij}^{(0)} \in \mathbb{R}^{d},\) where 
\(W_{j} \in \mathbb{R}^{d \times 1}\)and
\(b_{j} \in \mathbb{R}^{d}\). Because tabular covariates do not have a natural sequential order, MOCA uses feature-index embeddings ~\citep{lee2019} to distinguish covariate tokens. Thus, a learnable feature-identity
\(p_{j} \in \mathbb{R}^{d}\) is then added: \({\widetilde{h}}_{ij} = h_{ij}^{(0)} + p_{j}.\) Thus, 
for subject \(i\), the token matrix is
\(
H_i = 
\left[\widetilde{h}_{i1}^{\top},\ \widetilde{h}_{i2}^{\top},\ \ldots,\ \widetilde{h}_{ip}^{\top}\right]^{\top}
\in \mathbb{R}^{p \times d}.
\)

\subsection{Treatment Module}

\subsubsection{Linear and Self-Attention Branch}

The treatment module learns a treatment representation that summarizes
confounding information. There are two branches:

\begin{enumerate}
	\item The linear treatment branch is
\(H_{i}^{\left( L,T \right)} = {Tok}^{\left( L,T \right)}(X_{i}) \in \mathbb{R}^{p \times d}.\)
    \item The self-attention treatment branch first tokenizes \(X_{i}\):
\(H_{i}^{(S,T,0)} = {Tok}^{\left( S,T \right)}(X_{i}) \in \mathbb{R}^{p \times d}\),
then applies a Transformer encoder:
\(H_{i}^{\left( S,T \right)} = {Enc}_{T}\left( {Tok}^{\left( S,T \right)}(X_{i}) \right) \in \mathbb{R}^{p \times d}.\). 
\end{enumerate}

The encoder uses self-attention and a feed-forward subnetwork with GELU
activation \(
\operatorname{GELU}(x)
= 0.5x\left(
1 + \operatorname{erf}\left(x/\sqrt{2}\right)
\right).
\). The self-attention branch is designed to capture nonlinear feature
interactions by updating each feature representation using
input-dependent weights determined by all other feature tokens. Combined
with nonlinear feed-forward transformations, this allows the branch to
represent complex, non-additive relationships among covariates. Although
self-attention can provide some linear information, it is not the most
efficient and stable way to learn them~\citep{babiloni2023}.   The linear branch acts
as a low-complexity shortcut, which provides reliable linear
effects and reduces the computational burden on the attention branch.

\subsubsection{Treatment Gating Mechanism}

There is a cross-attention process needed to learn a 
treatment query : \(q^{(T)} \in \mathbb{R}^{d}\) which is used to summarize each branch mentioned above via multi-head
attention. To be specific, 
\[{z_{i}^{\left( L,T \right)} = MHA(q^{(T)},H_{i}^{\left( L,T \right)},H_{i}^{\left( L,T \right)}) \in \mathbb{R}^{d},
}{z_{i}^{\left( S,T \right)} = MHA(q^{(T)},H_{i}^{\left( S,T \right)},H_{i}^{\left( S,T \right)}) \in \mathbb{R}^{d}.}\]

In order to combine the information from two branches, a fusion gate is introduced with
gate weights. To be more specific, we concatenate the  two summary vectors: \(r_{i}^{(T)} = \lbrack z_{i}^{\left( L,T \right)};z_{i}^{\left( S,T \right)}\rbrack \in \mathbb{R}^{2d}.\)  A two-layer gating network is applied: \(g_{i}^{(T)} = W_{g2}^{(T)}\text{ }GELU\left( W_{g1}^{(T)}r_{i}^{(T)}+b_{g1}^{(T)} \right) + b_{g2}^{(T)} \in \mathbb{R}^{2}.\)  Next, a temperature-scaled softmax gives the treatment gate weights:
\[\alpha_{i}^{(T)} = softmax\left( \frac{g_{i}^{(T)}}{\tau_{g}} \right) = \left\lbrack \begin{array}{r}
\alpha_{iL}^{(T)} \\
\alpha_{iS}^{(T)}
\end{array} \right\rbrack,\alpha_{iL}^{(T)} + \alpha_{iS}^{(T)} = 1.
\]

In addition, a propensity score head is introduced to transform the gated
treatment representation into a treatment assignment probability. Suppose the weighted concatenation is: \(f_{i}^{(T)} = \lbrack\alpha_{iL}^{(T)}z_{i}^{\left( L,T \right)};\alpha_{iS}^{(T)}z_{i}^{\left( S,T \right)}\rbrack \in \mathbb{R}^{2d}.\) The propensity-score logit is then computed by
\({l}_{i}^{(T)} = W_{p2}\text{ }GELU(W_{p1}f_{i}^{(T)} + b_{p1}) + b_{p2},
\) and the estimated propensity score is
\({\widehat{e}}_{i} = \sigma\left( {l}_{i}^{(T)} \right),\ \)where
\(\sigma(u) = 1/(1 + e^{- u})\). At the same time, the treatment latent representation passed to the
outcome module is \(z_{i}^{(T)} \equiv \alpha_{iL}^{(T)}z_{i}^{\left( L,T \right)} + \alpha_{iS}^{(T)}z_{i}^{\left( S,T \right)} \in \mathbb{R}^{d}.\) This is converted into a single treatment token \(H_{i}^{(T)} = (z_{i}^{(T)})^{\top} \in \mathbb{R}^{1 \times d}.
\)

\subsection{Outcome Module}

The outcome module has two separate child modules; one is for the control
group \(Y_{i}(0)\) and another is for the treatment group \(Y_{i}(1)\). It
uses two separate learnable queries
\(q^{(0)} \in \mathbb{R}^{d},q^{(1)} \in \mathbb{R}^{d},\) for
estimating \(\mu_{0}(x)\) and \(\mu_{1}(x)\) from the two child modules. Within each child module, it contains three branches: 1. a
linear branch, 2. a nonlinear self-attention branch and 3. a frozen
treatment branch. For linear and self-attention branch, same as the procedure in treatment
module, we define \({H_{i}^{\left( L,Y \right)} = {Tok}^{\left( L,Y \right)}(X_{i}),
}{H_{i}^{\left( S,Y \right)} = {Enc}_{Y}\left( {Tok}^{\left( S,Y \right)}(X_{i}) \right).
}\)

For the frozen treatment branch,  a key feature of our
framework is that information flows from the treatment module to the
outcome module only, but not in the reverse direction. We refer to this
as one-way cross-attention with cutting feedback.

In many real world problems, modeling mistakes are common. From a causal-inference perspective, the treatment assignment mechanism and the outcome process play distinct roles. Accordingly, propensity score design is commonly conducted without examining outcome data \citep{rosenbaum1983,rubin2007}. If gradients from the outcome loss were allowed to update the treatment module, the learned treatment representation could be optimized for predicting the outcome. More importantly, misspecification of the outcome model could then propagate backward and distort the estimated propensity score or treatment representation. Some studies have shown that such model feedback can lead to biased propensity score estimates and poor causal effect estimation, motivating an explicit cut between the outcome and treatment components \citep{mccandless2010,zigler2013}. 

Therefore, we estimate the treatment module using the treatment objective and pass \(H_i^{(T)}\) and \(\widehat e_i\) to the outcome module through \(\operatorname{stopgrad}(\cdot)\). This preserves one-way information flow: treatment assignment information can inform outcome estimation, while the outcome model cannot alter the estimated treatment assignment mechanism. This cutting feedback idea is widely used in Modular Bayesian Inference \citep{bayarri2009}.
In our practice, the treatment module is first trained, and the frozen
treatment branch will not get updated during the outcome learning. Formally, the outcome model can be written as
\[({\widehat{\mu}}_{0}(X_{i}),{\widehat{\mu}}_{1}(X_{i})) = \mathcal{F}_{Y}(X_{i},\text{ }stopgrad(H_{i}^{(T)}),\text{ }stopgrad({\widehat{e}}_{i});\theta_{Y}).
\]

Here, \(stopgrad( \cdot )\) means that these quantities are used in
the forward pass, but gradients are not propagated back through them.
Therefore, \(\partial \mathcal{L}_Y / \partial \theta_T = 0\).
In summary, for each arm \(t \in \{ 0,1\}\), the query \(q^{(t)}\)
attends to three sources of information: 1. the outcome linear branch,
2. the outcome self-attention branch, 3. the frozen treatment branch.
Thus, similarly to the treatment module, we have 
\(z_i^{(L,t)}, z_i^{(S,t)}, z_i^{(T,t)}\), respectively.

Note that the \(z_{i}^{\left( T,t \right)}\) represents one-way cross-attention, with
the query \(q^{(t)}\) coming from the outcome module, while the key and value
come from the treatment module. Therefore, the outcome is allowed to ask
the treatment representation for information, but the treatment branch
itself does not query the outcome branch.  We then concatenate all three elements: \(r_{i}^{(t)} = \lbrack z_{i}^{\left( L,t \right)};z_{i}^{\left( S,t \right)};z_{i}^{\left( T,t \right)}\rbrack \in \mathbb{R}^{3d}.\) A fusion gate is also introduced to combine all information. For
each arm \(t\), an arm-specific gating network is used:
\[
\begin{aligned}
g_i^{(t)}
&= W_{g2}^{(t)} \, \mathrm{GELU}\!\left( W_{g1}^{(t)} r_i^{(t)} + b_{g1}^{(t)} \right) + b_{g2}^{(t)}
\in \mathbb{R}^{3}, \\
\alpha_i^{(t)}
&= \mathrm{softmax}\!\left( \frac{g_i^{(t)}}{\tau_g} \right)
= \begin{bmatrix}
\alpha_{iL}^{(t)} \\
\alpha_{iS}^{(t)} \\
\alpha_{iT}^{(t)}
\end{bmatrix},
\qquad
\alpha_{iL}^{(t)} + \alpha_{iS}^{(t)} + \alpha_{iT}^{(t)} = 1.
\end{aligned}
\]

The weighted fusion vector is
\(f_{i}^{(t)} = \lbrack\alpha_{iL}^{(t)}z_{i}^{\left( L,t \right)};\alpha_{iS}^{(t)}z_{i}^{\left( S,t \right)};\alpha_{iT}^{(t)}z_{i}^{\left( T,t \right)}\rbrack \in \mathbb{R}^{3d}.\), based on which, each arm has its own prediction head:
\({\widehat{\mu}}_{t}(X_{i}) = W_{h2}^{(t)}\text{ }GELU(W_{h1}^{(t)}f_{i}^{(t)} + b_{h1}^{(t)}) + b_{h2}^{(t)},t \in \{ 0,1\}.\)

Therefore,
\({\widehat{\tau}}_{i}\left( X_{i} \right) = {\widehat{\mu}}_{1}\left( X_{i} \right) - {\widehat{\mu}}_{0}\left( X_{i} \right),\)
which reflect the estimated CATE. And the ATE estimator is
\({\widehat{ATE}}_{\text{MeanTau}} = n^{-1}\sum_{i = 1}^{n}{\widehat{\tau}}_{i}\left( X_{i} \right)\).

\subsection{Training Objective}

The treatment and outcome modules are trained separately. First, the
treatment module is trained using the binary cross-entropy loss function: \(\mathcal{L}_{T} = - \frac{1}{n}\sum_{i = 1}^{n}\lbrack T_{i}\log{\widehat{e}}_{i} + (1 - T_{i})\log(1 - {\widehat{e}}_{i})\rbrack.\) The outcome module is trained only on observed factual outcomes. Let \(\mathcal{I}_{0} = \{ i:T_{i} = 0\},\mathcal{I}_{1} = \{ i:T_{i} = 1\}.\) Then the loss is
\[\mathcal{L}_{Y} = \frac{1}{\mid \mathcal{I}_{0} \mid}\sum_{i \in \mathcal{I}_{0}}^{}(Y_{i} - {\widehat{\mu}}_{0}(X_{i}))^{2} + \frac{1}{\mid \mathcal{I}_{1} \mid}\sum_{i \in \mathcal{I}_{1}}^{}(Y_{i} - {\widehat{\mu}}_{1}(X_{i}))^{2}.\]

\subsection{Theoretical Properties of MOCA}
\label{sec:moca-theory}

Cutting feedback originates from modular Bayesian analysis, where unwanted reverse feedback is removed during posterior updating~\citep{frazier2025, liu2025,bayarri2009}. MOCA, however, is not defined
through a joint posterior distribution. Instead, it imposes modularity through
one-way architectural dependence, staged optimization, and gradient
detachment. We therefore formulate the cut feedback principle at the level of
learned representations using Markov kernels
\citep{Kallenberg1997,fritz2020synthetic}. Proofs and technical regularity
conditions for the results in this subsection are provided in the
Supplementary Material [S4].

Let \(
D_T=\{(X_i,T_i)\}_{i=1}^n
\)
denote the treatment data and let
\(
D_O=\{Y_i\}_{i=1}^n
\)
denote the additional outcome information. Let \(Z_T\in\mathcal Z_T\)
and \(Z_Y\in\mathcal Z_Y\) denote the learned treatment and outcome
representations, respectively. A general joint learning procedure may be
represented by the Markov kernel
\(
K\!\left(dz_T,dz_Y\mid D_T,D_O\right).
\)
MOCA defines the directed cut kernel
\[
K_{\mathrm{MOCA}}\!\left(dz_T,dz_Y\mid D_T,D_O\right)
=
K_T\!\left(dz_T\mid D_T\right)
K_Y\!\left(dz_Y\mid D_T,D_O,z_T\right).
\]

\begin{proposition}[Representation-level cutting feedback]
\label{prop:moca-treatment-autonomy}
Under standard measurability conditions, the directed product above defines
a valid Markov kernel. Moreover, its treatment representation marginal
satisfies
\(
K_{\mathrm{MOCA},T}\!\left(dz_T\mid D_T,D_O\right)
=
K_T\!\left(dz_T\mid D_T\right).
\)

Consequently, the distribution of the learned treatment representation is
invariant to the outcome information \(D_O\). The outcome module may
condition on \(Z_T\), but outcome information cannot feed back and alter
the treatment representation.
\end{proposition}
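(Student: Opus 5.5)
The plan is to treat this as the standard composition (semidirect product) of Markov kernels, followed by a marginalization via Fubini/Tonelli. Fix measurable spaces $(\mathcal D_T,\mathcal A_T)$, $(\mathcal D_O,\mathcal A_O)$, $(\mathcal Z_T,\mathcal B_T)$ and $(\mathcal Z_Y,\mathcal B_Y)$. By the measurability conditions, I take $K_T$ to be a Markov kernel from $\mathcal D_T$ to $\mathcal Z_T$, and $K_Y$ to be a Markov kernel from $\mathcal D_T\times\mathcal D_O\times\mathcal Z_T$ to $\mathcal Z_Y$. Since $K_T$ does not depend on $D_O$, it is trivially a kernel from $\mathcal D_T\times\mathcal D_O$ as well: the map $(D_T,D_O)\mapsto K_T(B\mid D_T)$ is the composition of a measurable map with the coordinate projection.

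\textbf{Step 1 (validity).} For a measurable rectangle $A\times B$ and, more generally, any $C\in\mathcal B_T\otimes\mathcal B_Y$, I define
\[
K_{\mathrm{MOCA}}(C\mid D_T,D_O)=\int_{\mathcal Z_T}\int_{\mathcal Z_Y}\mathbf 1_C(z_T,z_Y)\,K_Y(dz_Y\mid D_T,D_O,z_T)\,K_T(dz_T\mid D_T).
\]
\begin{itemize}
\item For fixed data, countable additivity follows from monotone convergence applied twice.
\item Total mass one follows because both kernels are probability measures.
\item Measurability in $(D_T,D_O)$ is the key point. I would first verify it for indicators of rectangles, where the inner integral is $\mathbf 1_A(z_T)K_Y(B\mid D_T,D_O,z_T)$, which is jointly measurable. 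Integrating a jointly measurable nonnegative function against a kernel gives a measurable function; this is the standard lemma, e.g.\ Kallenberg's Lemma 1.41.
\item I then extend to all of $\mathcal B_T\otimes\mathcal B_Y$ with a Dynkin $\pi$--$\lambda$ argument. Rectangles form a $\pi$-system, and the class of sets $C$ for which measurability holds is a $\lambda$-system, by monotone convergence and complements using total mass one.
\end{itemize}
This $\pi$--$\lambda$ extension, together with the joint measurability of the integrand, is the only genuinely technical step. I expect it to be the main obstacle, but it is routine once $K_Y$ is assumed jointly measurable in $(D_T,D_O,z_T)$.

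\textbf{Step 2 (marginal).} I take $C=A\times\mathcal Z_Y$. The inner integral equals $\mathbf 1_A(z_T)\,K_Y(\mathcal Z_Y\mid D_T,D_O,z_T)=\mathbf 1_A(z_T)$, because $K_Y$ is a probability kernel. Hence $K_{\mathrm{MOCA},T}(A\mid D_T,D_O)=K_T(A\mid D_T)$ for every $A\in\mathcal B_T$, which is the claimed identity.

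\textbf{Step 3 (invariance).} The right-hand side of the identity is a function of $D_T$ alone. Therefore, for any two outcome datasets $D_O$ and $D_O'$ with the same $D_T$, the treatment marginals coincide, so the law of $Z_T$ is invariant to $D_O$.

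Finally, I would connect this to the architecture. The stopgrad construction, with $\partial\mathcal L_Y/\partial\theta_T=0$ and training staged as $\mathcal L_T$ first, means the map from data to $\theta_T$ and any training randomness never reads $Y$. So MOCA's actual procedure is an instance of $K_T(\cdot\mid D_T)$, possibly a Dirac kernel at $\hat\theta_T(D_T)$. The conditioning of the outcome module on $Z_T$ enters only through $K_Y$, which integrates out in Step 2.
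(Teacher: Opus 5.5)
Your proposal is correct and takes essentially the same route as the paper. Both build $K_{\mathrm{MOCA}}$ as the directed product $\int K_Y(\cdot\mid D_T,D_O,z_T)\,K_T(dz_T\mid D_T)$, check total mass, countable additivity and measurability, and then set $A_Y=\mathcal Z_Y$ and use $K_Y(\mathcal Z_Y\mid D_T,D_O,z_T)=1$ to recover $K_T(A_T\mid D_T)$; your staged-training and stopgrad remark also matches the paper's algorithmic connection. Your version is slightly tighter: you define the measure directly on the product $\sigma$-algebra through $\mathbf 1_C$ with a $\pi$--$\lambda$ (or kernel-integration lemma) argument, which makes explicit the ``extension from rectangles'' step the paper only asserts, and it shows the standard Borel assumption is not needed here.
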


MOCA realizes this factorization algorithmically. Let \(U_T\) and \(U_Y\)
denote randomness arising from initialization, mini-batch construction, or
optimization. The staged learning procedure can be written as
\(
\widehat{\theta}_T
=
\mathcal A_T(D_T,U_T),
\widehat{\theta}_Y
=
\mathcal A_Y(D_T,D_O,\widehat{\theta}_T,U_Y),
\)
where \(\widehat{\theta}_T\) is held fixed during outcome optimization. In
particular,
\(\partial \mathcal{L}_Y / \partial \theta_T = 0\).
Because the treatment-learning algorithm \(\mathcal A_T\) has no dependence
on \(D_O\), staged optimization with gradient detachment enforces the
treatment-autonomy property in Proposition~\ref{prop:moca-treatment-autonomy}.

After the treatment module has been learned and fixed, write
\(
Z_T=h_T(X),W=(X,Z_T).
\)
For treatment arm \(t\in\{0,1\}\), let
\(
P_t(dy\mid w)
=
\mathcal L(Y\mid W=w,T=t)
\)
denote the true conditional outcome law. For a prediction function \(f_t\),
consider the fixed-variance Gaussian predictive kernel
\[
Q_{f,t}(dy\mid w)
=
\mathcal N\!\left(dy;f_t(w),\sigma_t^2\right),
\qquad
\sigma_t^2>0.
\]

\begin{theorem}[Gaussian predictive KL-MSE equivalence]
\label{thm:kl-mse}
Suppose that \(P_t(\cdot\mid W)\) has a finite second moment and that the
expected conditional forward KL divergence is finite. Then
\[
\begin{aligned}
\mathcal K_t(f_t)
&=
\mathbb E_{W\mid T=t}
\left[
\operatorname{KL}
\left\{
P_t(\cdot\mid W)
\,\Vert\,
Q_{f,t}(\cdot\mid W)
\right\}
\right] \\
&=
C_t
+
\frac{1}{2\sigma_t^2}
\mathbb E
\left[
\{Y-f_t(W)\}^2
\mid T=t
\right],
\end{aligned}
\]
where \(C_t\) does not depend on \(f_t\). Consequently, for any prediction
class \(\mathcal F_t\),
\[
\arg\min_{f_t\in\mathcal F_t}\mathcal K_t(f_t)
=
\arg\min_{f_t\in\mathcal F_t}
\mathbb E
\left[
\{Y-f_t(W)\}^2
\mid T=t
\right].
\]
Thus, the MSE-trained MOCA outcome head is predictively KL-optimal within
the corresponding fixed-variance Gaussian predictive class.
\end{theorem}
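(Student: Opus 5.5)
The plan is to expand the KL divergence into a cross-entropy term and an entropy-type term, and then compute the cross-entropy of the Gaussian kernel explicitly. Fix \(t\) and \(w\).

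\textbf{Step 1: decompose the KL divergence.} If \(P_t(\cdot\mid w)\) is not absolutely continuous with respect to Lebesgue measure, the KL divergence is \(+\infty\). The finiteness hypothesis then forces \(P_t(\cdot\mid w)\ll \text{Lebesgue}\) for \(P_{W\mid T=t}\)-almost every \(w\). Write \(p_t(y\mid w)\) for the resulting density. For such \(w\),
\[
\operatorname{KL}\{P_t(\cdot\mid w)\Vert Q_{f,t}(\cdot\mid w)\}
=\int p_t(y\mid w)\log p_t(y\mid w)\,dy-\int p_t(y\mid w)\log \phi_{\sigma_t}\{y-f_t(w)\}\,dy ,
\]
where \(\phi_{\sigma_t}\) is the \(\mathcal N(0,\sigma_t^2)\) density.

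\textbf{Step 2: compute the Gaussian cross-entropy.} Substituting \(-\log\phi_{\sigma_t}(u)=\tfrac12\log(2\pi\sigma_t^2)+u^2/(2\sigma_t^2)\), the second term equals
\[
\tfrac12\log(2\pi\sigma_t^2)+\frac{1}{2\sigma_t^2}\,\mathbb E[\{Y-f_t(W)\}^2\mid W=w,T=t].
\]
This is finite by the finite-second-moment assumption, provided \(f_t(w)\) is finite.

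\textbf{Step 3: isolate the \(f_t\)-free part.} Define \(C_t=\mathbb E_{W\mid T=t}\bigl[-H\{P_t(\cdot\mid W)\}\bigr]+\tfrac12\log(2\pi\sigma_t^2)\), where \(-H(p)=\int p\log p\). This quantity involves only \(P_t\) and \(\sigma_t\), so it does not depend on \(f_t\).

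\textbf{Step 4: integrate over \(W\mid T=t\).} Taking the expectation of the identity from Steps 1–2 and applying the tower property \(\mathbb E_{W\mid T=t}\mathbb E[\cdot\mid W,T=t]=\mathbb E[\cdot\mid T=t]\) yields the displayed formula for \(\mathcal K_t(f_t)\).

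\textbf{Main obstacle: showing \(C_t\) is finite.} Without this, the decomposition could read \(\infty-\infty\). I would proceed as follows. Consider some \(f_t\) at which the expected KL divergence is finite; this is the reference point in the hypothesis. At that \(f_t\), rearrange the identity to read \(\mathbb E[-H]=\mathcal K_t-\text{(cross-entropy)}\); both terms on the right are finite, so \(\mathbb E[-H]\) is finite.

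The entropy is bounded above pointwise by the Gaussian maximum-entropy bound \(\tfrac12\log(2\pi e\operatorname{Var})\), which is finite under the second-moment assumption. This gives a one-sided bound that confirms the integrals are well defined.

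For any other \(f_t\) with \(\mathbb E[\{Y-f_t(W)\}^2\mid T=t]<\infty\), the formula then holds with the same \(C_t\). If this mean squared error is infinite, both sides equal \(+\infty\).

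\textbf{Step 5: the argmin.} The map \(m\mapsto C_t+m/(2\sigma_t^2)\) is strictly increasing, with \(C_t\) finite and \(\sigma_t^2>0\). Hence minimizing \(\mathcal K_t\) over \(\mathcal F_t\) and minimizing the conditional mean squared error over \(\mathcal F_t\) give the same argmin sets, including the case where either set is empty.

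Finally, the empirical loss \(\mathcal L_Y\) is the sample analogue, arm by arm, of \(\mathbb E[\{Y-f_t(W)\}^2\mid T=t]\). Therefore the MSE-trained MOCA outcome head targets the KL-optimal member of the fixed-variance Gaussian predictive class, which is the final claim of the theorem.
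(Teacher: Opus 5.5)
Your proposal is correct and follows essentially the same route as the paper. The paper likewise splits the conditional KL into $\int p_t\log p_t$ plus the Gaussian cross-entropy $\tfrac12\log(2\pi\sigma_t^2)+\mathbb E[\{Y-f_t(w)\}^2\mid W=w,T=t]/(2\sigma_t^2)$, collects the $f_t$-free pieces into $C_t(w)$, averages over $W\mid T=t$, and concludes from $1/(2\sigma_t^2)>0$. Your care about absolute continuity and about ruling out $\infty-\infty$ goes beyond the paper, which simply assumes a density and computes formally. One point to tighten: a pointwise second-moment assumption does not by itself make the expected squared error at your reference $f_t$ finite, so you should invoke the maximum-entropy bound before asserting that both terms are finite. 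That bound does supply it, since it gives $\operatorname{KL}\{P_t(\cdot\mid w)\Vert Q_{f,t}(\cdot\mid w)\}\ge m(w)/(4\sigma_t^2)-\tfrac12\log 2$, where $m(w)$ is the conditional squared error of $f_t$ at $w$.
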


The population masked outcome risk is
\(
\mathcal R_Y(f_0,f_1)
=
\mathbb E
\left[
T\{Y-f_1(W)\}^2
+
(1-T)\{Y-f_0(W)\}^2
\right].
\)
Because the two outcome heads enter separate terms, their population targets
can be characterized arm by arm.

\begin{theorem}[Population target and ATE recovery]
\label{thm:moca-ate}
For \(t\in\{0,1\}\), define
\[
R_t(f_t)
=
\mathbb E
\left[
\{Y-f_t(W)\}^2
\mid T=t
\right],
\qquad
m_t(W)
=
\mathbb E(Y\mid W,T=t).
\]
For every square integrable \(f_t\),
\(
R_t(f_t)
=
R_t(m_t)
+
\left\|f_t-m_t\right\|_{L^2(P_{W\mid T=t})}^2.
\)
Therefore, within a restricted MOCA function class \(\mathcal F_t\), the
population minimizer is the best \(L^2(P_{W\mid T=t})\) approximation to the
treatment specific conditional mean.

Suppose additionally that consistency, conditional ignorability, and positivity hold. Because \(Z_T=h_T(X)\) and the MOCA outcome module
retains \(X\),
\[
m_t\{X,h_T(X)\}
=
\mathbb E(Y\mid X,T=t)
=
\mathbb E\{Y(t)\mid X\}
=
\mu_t(X).
\]
If the MOCA function class contains the maps
\(
\{x,h_T(x)\}\longmapsto\mu_t(x), t\in\{0,1\},
\)
and the population minimizers are attained, then
\(
f_t^*\{X,h_T(X)\}=\mu_t(X), t\in\{0,1\}.
\)
Consequently, the population plug-in estimand satisfies
\[
\begin{aligned}
\psi_{\mathrm{MOCA}}^*
&=
\mathbb E
\left[
f_1^*\{X,h_T(X)\}
-
f_0^*\{X,h_T(X)\}
\right] \\
&=
\mathbb E\{\mu_1(X)-\mu_0(X)\}
=
\mathbb E\{Y(1)-Y(0)\}
=
\psi.
\end{aligned}
\]
\end{theorem}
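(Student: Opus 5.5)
The argument has three steps: an arm-wise Pythagorean decomposition, an identification chain, and a plug-in step.

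\textbf{Step 1 (decomposition).} Fix $t$ and work under the conditional law given $T=t$. Let $f_t\in L^2(P_{W\mid T=t})$. Square integrability of $Y$ given $T=t$ is assumed, as in Theorem~\ref{thm:kl-mse}. Write
\[
Y-f_t(W)=\{Y-m_t(W)\}+\{m_t(W)-f_t(W)\}
\]
and expand the square. The cross term is
\[
2\,\mathbb E\big[\{Y-m_t(W)\}\{m_t(W)-f_t(W)\}\mid T=t\big].
\]
By the tower property, condition on $(W,T=t)$. The factor $m_t(W)-f_t(W)$ is $W$-measurable, so it pulls out of the inner expectation. The inner expectation is then $\mathbb E(Y\mid W,T=t)-m_t(W)=0$. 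Cauchy--Schwarz guarantees the product is integrable, so the cross term vanishes. This gives
\[
R_t(f_t)=R_t(m_t)+\|f_t-m_t\|^2_{L^2(P_{W\mid T=t})}.
\]
Since $R_t(m_t)$ does not depend on $f_t$, minimizing $R_t$ over $\mathcal F_t$ is the same as minimizing the $L^2$ distance to $m_t$. This is the best-approximation claim.

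The masked risk $\mathcal R_Y$ also needs to separate by arm. Write
\[
\mathcal R_Y(f_0,f_1)=P(T=1)\,R_1(f_1)+P(T=0)\,R_0(f_0).
\]
Both weights are positive by positivity. So $(f_0^*,f_1^*)$ minimizes $\mathcal R_Y$ exactly when each $f_t^*$ minimizes its own $R_t$.

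\textbf{Step 2 (identification).} Since $Z_T=h_T(X)$ is a measurable function of $X$, we have $\sigma(W)=\sigma(X)$. Hence $m_t(W)=\mathbb E(Y\mid X,T=t)$ almost surely. This is the step where it matters that the outcome module keeps $X$ and that $h_T$ is deterministic once it is fixed; the autonomy in Proposition~\ref{prop:moca-treatment-autonomy} ensures $h_T$ does not depend on the outcomes.

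Next, consistency gives $\mathbb E(Y\mid X,T=t)=\mathbb E\{Y(t)\mid X,T=t\}$. Ignorability gives $\mathbb E\{Y(t)\mid X,T=t\}=\mathbb E\{Y(t)\mid X\}=\mu_t(X)$. Positivity guarantees these conditional expectations are well defined on the whole support of $X$, not only on the region where $T=t$ occurs.

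\textbf{Step 3 (attainment and plug-in).} Suppose $\mathcal F_t$ contains the map $g_t:\{x,h_T(x)\}\mapsto\mu_t(x)$. By Step 2, $g_t=m_t$, so $g_t$ attains the lower bound $R_t(m_t)$. Any minimizer $f_t^*$ therefore has $\|f_t^*-m_t\|_{L^2(P_{W\mid T=t})}=0$, which means $f_t^*=\mu_t(X)$ almost surely under $P_{X\mid T=t}$.

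\textbf{Main obstacle.} This is the delicate step. The ATE integrates over the marginal law $P_X$, not over $P_{X\mid T=t}$. Positivity, with $0<P(T=t\mid X)$, makes $P_X$ absolutely continuous with respect to $P_{X\mid T=t}$. The Radon--Nikodym derivative is $P(T=t)/P(T=t\mid X)$. Hence the almost-sure equality under $P_{X\mid T=t}$ transfers to $P_X$.

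Taking expectations under $P_X$ then gives
\[
\psi^*_{\mathrm{MOCA}}=\mathbb E\{\mu_1(X)-\mu_0(X)\}=\mathbb E\{Y(1)-Y(0)\}=\psi,
\]
using linearity and the tower property. Integrability of $\mu_t(X)$ under $P_X$ needs a brief check. It follows from $\mathbb E\{Y(t)^2\}<\infty$, which I take as part of the regularity conditions.
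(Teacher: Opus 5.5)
Your proposal is correct and follows the paper's own route: the arm-wise decomposition with the vanishing cross term, then $\sigma(W)=\sigma(X)$ together with consistency and ignorability to identify $m_t$ with $\mu_t$, then substitution into the plug-in functional. Your explicit change-of-measure step is a welcome tightening rather than a different argument: you use positivity to get $P_X \ll P_{X\mid T=t}$, so the $P_{X\mid T=t}$-a.s.\ equality $f_t^*=\mu_t$ carries over to the marginal law on which $\psi$ is defined, a point the paper's proof passes over without comment.
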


The preceding result also gives a direct characterization under
misspecification. For arbitrary outcome predictors \(f_0\) and \(f_1\), let
\(
e_t(X)
=
f_t\{X,h_T(X)\}-\mu_t(X).
\)
Then the population plug-in ATE error satisfies
\(
\left|\psi_f-\psi\right|
\leq
\left\|e_1\right\|_{L^2(P_X)}
+
\left\|e_0\right\|_{L^2(P_X)}.
\)
Moreover, let
\(
p_t(X)=P(T=t\mid X), \pi_t=P(T=t),
\)
and suppose that strong overlap holds:
\(
p_t(X)\geq\epsilon_t>0
\quad\text{almost surely}.
\)
Defining the arm-specific excess MSE risk by
\(
\mathcal E_t(f_t)
=
R_t(f_t)-R_t(m_t),
\)
we obtain
\[
\left|\psi_f-\psi\right|
\leq
\sum_{t=0}^1
\sqrt{\frac{\pi_t}{\epsilon_t}}
\sqrt{\mathcal E_t(f_t)}.
\]
Hence, small arm-specific excess prediction risk implies small population
ATE error, with the guarantee becoming weaker as overlap deteriorates.

Finally, the effect of allowing reverse feedback can be characterized
directly. Let \(f_t^{\mathrm{cut}}\) denote the outcome functions obtained
under one-way cutting, and suppose that outcome to treatment feedback changes
them to
\(
f_t^{\mathrm{fb}}(X)
=
f_t^{\mathrm{cut}}(X)+\Delta_t(X).
\)
Define
\(
\delta_{\mathrm{fb}}
=
\mathbb E\{\Delta_1(X)-\Delta_0(X)\}
\)
and let
\(
b_{\mathrm{cut}}
=
\psi_{\mathrm{cut}}-\psi
\)
denote the population ATE bias of the cut solution.

\begin{proposition}[Feedback-induced ATE distortion]
\label{prop:feedback-distortion}
Let
\(
\mathcal R_{\mathrm{cut}}
=
(\psi_{\mathrm{cut}}-\psi)^2,
\mathcal R_{\mathrm{fb}}
=
(\psi_{\mathrm{fb}}-\psi)^2.
\)
Then
\(
\mathcal R_{\mathrm{fb}}
-
\mathcal R_{\mathrm{cut}}
=
2b_{\mathrm{cut}}\delta_{\mathrm{fb}}
+
\delta_{\mathrm{fb}}^2.
\)
If the cut solution is ATE-correct,
\(
\psi_{\mathrm{cut}}=\psi,
\), 
\(
\mathcal R_{\mathrm{fb}}
-
\mathcal R_{\mathrm{cut}}
=
\delta_{\mathrm{fb}}^2
=
\left[
\mathbb E\{\Delta_1(X)-\Delta_0(X)\}
\right]^2
\geq 0.
\)
Thus, additional outcome to treatment feedback cannot improve an already
ATE correct cut solution. It preserves the same ATE only when the average
feedback induced perturbations in the two treatment arms cancel exactly.
\end{proposition}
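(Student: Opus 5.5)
The argument is purely algebraic once the feedback ATE is written in terms of the cut ATE, so the plan is to express $\psi_{\mathrm{fb}}$ through $\psi_{\mathrm{cut}}$ and expand a square.

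First, define the population plug-in ATEs as in the preceding discussion:
\[
\psi_{\mathrm{cut}}=\mathbb E\{f_1^{\mathrm{cut}}(X)-f_0^{\mathrm{cut}}(X)\},
\qquad
\psi_{\mathrm{fb}}=\mathbb E\{f_1^{\mathrm{fb}}(X)-f_0^{\mathrm{fb}}(X)\}.
\]
I assume that $f_t^{\mathrm{cut}}$ and $\Delta_t$ are integrable under $P_X$, so that all these expectations are finite. Substituting $f_t^{\mathrm{fb}}=f_t^{\mathrm{cut}}+\Delta_t$ and using linearity of expectation gives
\[
\psi_{\mathrm{fb}}=\psi_{\mathrm{cut}}+\delta_{\mathrm{fb}}.
\]
It follows that
\[
\psi_{\mathrm{fb}}-\psi=b_{\mathrm{cut}}+\delta_{\mathrm{fb}}.
\]

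Second, square this identity:
\[
\mathcal R_{\mathrm{fb}}=(b_{\mathrm{cut}}+\delta_{\mathrm{fb}})^2=b_{\mathrm{cut}}^2+2b_{\mathrm{cut}}\delta_{\mathrm{fb}}+\delta_{\mathrm{fb}}^2.
\]
Since $\mathcal R_{\mathrm{cut}}=b_{\mathrm{cut}}^2$ by definition, subtracting it yields the claimed decomposition $\mathcal R_{\mathrm{fb}}-\mathcal R_{\mathrm{cut}}=2b_{\mathrm{cut}}\delta_{\mathrm{fb}}+\delta_{\mathrm{fb}}^2$.

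Third, specialize to an ATE-correct cut solution. If $\psi_{\mathrm{cut}}=\psi$, for instance under the conditions of Theorem~\ref{thm:moca-ate} where $f_t^{\mathrm{cut}}=f_t^*$, then $b_{\mathrm{cut}}=0$. The cross term vanishes and the difference reduces to $\delta_{\mathrm{fb}}^2=[\mathbb E\{\Delta_1(X)-\Delta_0(X)\}]^2\ge 0$. This difference is zero exactly when $\mathbb E\Delta_1(X)=\mathbb E\Delta_0(X)$, which is the cancellation condition stated in the proposition.

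There is no substantive obstacle here. The only point requiring care is bookkeeping: the perturbed functions must be evaluated under the same marginal law $P_X$ as the cut solution. This is what lets linearity turn the effect of feedback into the single scalar shift $\delta_{\mathrm{fb}}$.
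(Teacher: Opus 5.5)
Your proof is correct and matches the paper's argument essentially line for line: linearity gives $\psi_{\mathrm{fb}}-\psi=b_{\mathrm{cut}}+\delta_{\mathrm{fb}}$, expanding the square yields the decomposition, and setting $b_{\mathrm{cut}}=0$ gives the ATE-correct case. Your explicit integrability assumption on $f_t^{\mathrm{cut}}$ and $\Delta_t$ is a harmless addition that the paper leaves implicit.
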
 

The theoretical
justification for cutting feedback is that reverse feedback
offers no population level ATE improvement once the autonomous cut solution
has correctly recovered the effect.

\section{Simulation Study}

\subsection{Simulation}
To comprehensively evaluate the performance of the proposed method, we
conducted seven simulation studies under multiple data-generating
mechanisms including linear, linear with outcome model misspecification, non-linear, t-distribution, hidden-confounding, and high dimensional [Table 1, S2]. Among the simulation scenarios, the linear design serves as the correctly specified benchmark. The linear-Omis scenario evaluates estimator stability when the outcome model is misspecified while the treatment model remains correctly specified. The nonlinear and high-dimensional scenarios assess nonlinear approximation capacity and performance under high-dimensional estimation and regularization, respectively.  The heavy-tailed scenario retains linear treatment and outcome structures but evaluates robustness to extreme covariate values. The two hidden-confounding scenarios violate the ignorability assumption and therefore represent failures of causal identification caused by omitted confounders.

Including MOCA, there
are 9 methods compared in the simulation. IPW \citep{robins2000} and AIPW \citep{bang2005} are
well-established methods in classical causal inference, and AIPW is
especially attractive due to its doubly robustness property. 
The X-learner \citep{kunzel2019} represents a class of machine-learning based approaches that have shown
competitive performance in treatment effect estimation. TARNet \citep{johansson2016} and
DragonNet \citep{shi2019} are representation-learning methods and provide important
neural network based benchmarks for comparison. BART \citep{chipman2010bart} and Causal Forest \citep{wager2018causalforest} are two tree-based methods, providing a non-parametric comparison. The Do\_PFN \citep{robertson2026dopfn}, is a recent tabular causal model, as a tabular baseline [S1].

\subsection{Ablation Experiments}
To evaluate the contribution of each component in MOCA, we considered several architectural variants. Each variant removes or modifies one key design element while keeping the remaining training procedure unchanged [S3]. Specifically, we considered six variants: Remove Linear Branch, Remove Treatment Branch, Concatenation Fusion, One-way without Cutting Feedback, Two-way with Cutting Feedback, and Two-way without Cutting Feedback. These experiments directly evaluate the contributions of the linear branch, treatment representation, fusion mechanism, and cutting-feedback constraint.

All simulations and model training were conducted in Python (3.11.4). The proposed model was implemented in PyTorch (version 2.5.1+cu121),
with NumPy (version 2.4.3) and Pandas (3.0.1) used for data generation
and numerical computation. Random seeds with different seeds for different samples were fixed to ensure
reproducibility. We considered 100 observations in each of the training, validation, and test sets. The hyperparameters are set to the same during the simulation [Table S1]. 

\section{Results}

\subsection{Simulation Results}

The simulation results suggest that MOCA is competitive across a wide range of settings [Table 2]. For ATE estimation, it achieved among the smallest bias mean and RMSE mean in the linear-Omis, nonlinear, hidden-confounding, and high-dimensional scenarios. In the correctly specified linear setting, MOCA attained an ATE bias mean of 0.174 and an RMSE mean of 0.459. Under outcome-model misspecification, it remained stable, achieving the smallest ATE bias mean of 0.204 and an RMSE mean of 0.579. By contrast, the X-learner performed best in the linear setting, with a bias mean of 0.044 and an RMSE mean of 0.303, but deteriorated in the linear-Omis scenario, where its bias mean and RMSE mean increased to 1.075 and 1.133. This comparison supports the robustness of MOCA’s cutting-feedback design.

MOCA also achieved the smallest mean ATE bias and RMSE in the nonlinear and high-dimensional settings, indicating strong nonlinear approximation and regularization performance. Under (t)-distributed covariates, the X-learner performed slightly better because the treatment and outcome mechanisms remained linear and were well aligned with its regression-based structure \citep{kunzel2019}. TARNet and DragonNet were less stable under heavy-tailed data, possibly because shared representations trained with squared-error loss are sensitive to extreme outcomes \citep{fan2021}. Nevertheless, MOCA remained robust, with an ATE bias mean of 0.131 and an RMSE mean of 0.615. 

Under violations of ignorability, MOCA generally had smaller bias than competing methods, suggesting that directional separation may reduce contamination between treatment and outcome learning when the model is misspecified or information is incomplete. However, valid estimation still depends on appropriate identifying assumptions \citep{vanderweele2011}. MOCA also produced fewer extreme estimates than IPW and AIPW [Figure S1–S2]. These traditional methods can become unstable when estimated propensity scores approach zero or one, leading to extreme weights, outliers, and larger RMSE \citep{rosenbaum1983,li2019}.

For CATE estimation, MOCA performed similarly to most machine-learning methods [Table 2]. Its more limited improvement may arise because all individuals share a common frozen treatment representation, which captures population-level treatment information but may be less flexible for individual-level heterogeneity. For conformal ITE inference, MOCA provided the most stable coverage, generally close to or above the nominal $95\%$ level. Causal Forest was also relatively well calibrated but often produced wider intervals, whereas TARNet, DragonNet, BART, and X-Learner showed undercoverage in hidden-confounding scenarios. Do-PFN was less stable, overcovering with wide intervals in high-dimensional settings while undercovering in linear and heavy-tailed settings [Table S2].

In the bias–RMSE plots, MOCA generally lies near the lower-left corner, reflecting both low RMSE and bias close to zero [Figure 3]. Its advantage is most apparent in the linear-Omis, nonlinear, high-dimensional, and hidden-confounding scenarios. In simpler settings, particularly the (t)-distribution scenario, several methods perform similarly, although MOCA remains among the strongest overall.

MOCA also offers a transparent architecture by separating treatment and outcome learning into dedicated modules. It provides an explicit treatment representation and propensity score output, enabling standard overlap and covariate balance diagnostics. Unlike Do-PFN, it is trained directly on the target dataset and does not depend on large-scale pretraining that may be mismatched to the application domain. Overall, MOCA outperforms IPW and AIPW in most challenging scenarios while remaining competitive with other machine-learning methods.

\subsection{Ablation results}
Across all ablation settings, the full MOCA architecture achieved the smallest
average ATE bias, with an ATE bias mean of \(-0.029\) [Table 3, Figure S3]. The results show that
removing the cutting-feedback mechanism leads to substantially larger errors in
both the one-way and two-way architectures, highlighting the importance of
preventing outcome-loss gradients from contaminating the treatment
representation. The concatenation fusion variant also performed worse than gated fusion,
suggesting that adaptive weighting of treatment and outcome representations is
beneficial.

The linear branch had a relatively modest effect in this setting because the DGP
is primarily driven by nonlinear components, and the gated model assigns smaller
weights to the linear representation. Nevertheless, removing the linear branch
still increased ATE bias, indicating that low-order additive signals remain
useful. Removing the treatment branch also increased both bias and RMSE,
supporting the role of propensity-informed treatment representations. Finally,
the two-way variants produced larger bias, consistent with the concern that
bidirectional information flow violates the intended modular causal design.
Overall, the ablation results support the architectural choices in MOCA and, in
particular, demonstrate the importance of the cutting-feedback mechanism.

\subsection{Real Data Results}

Two open-sourced real datasets are used here [Table 4]. We estimated uncertainty for MOCA by repeatedly fitting the model and combining the resulting ATE estimates using a Rubin-style pooling rule~\citep{barnard1999}. IHDP refers to the Infant Health and Development Program dataset, which comes from a randomized
experiment conducted between 1985 and 1988 to study the effect of
home-based intervention on infants' cognitive test scores~\citep{louizos2017}.
There are 747 observations with 608 controls and 139 treatments. This
version contains the treatment indicator, the observed outcome
\(y_{\text{factual}}\), the counterfactual outcome
\(y_{\text{cfactual}}\), the mean potential outcomes under control and
treatment (\(\mu_{0}\) and \(\mu_{1}\)), and 25 covariates. Among these,
\(X_{1}\)--\(X_{6}\) are continuous variables, while
\(X_{7}\)--\(X_{25}\) are binary variables. A counterfactual outcome included in this dataset, so that we can show the ATE bias directly. The IHDP results show that MOCA has the smallest ATE bias of 0.006. We have a pooling estimated ATE of 3.991 and a 95\% confidence interval of (3.69, 4.29).

DW refers to the Dehejia--Wahba dataset, which is a subset of the
Lalonde data~\citep{dehejia1999}.  It contains 445 observations with 185 treatments
and 260 controls. Typically, we expected ATE is a positive number, and the
doubly robust method AIPW can be a good reference~\citep{iacus2019}. The
variables include the treatment indicator treated, age, education, race,
marital status, no-degree status, earnings in 1974 and 1975,
post-treatment earnings in 1978, and unemployment indicators u74 and
u75. Among these, re78 is usually treated as the post-treatment outcome.
However, this dataset does not have a truth for reference. In
this case, we will report only the ATE instead. The ATE based on doubly
robust method AIPW is 784.214 with a 95\% confidence interval of (-1746.59, 3315.02), which is a potential reference for other methods. MOCA pooling estimates ATE to be 610.776 and a 95\% confidence interval of (-163.01, 1384.57), which is closest to the AIPW method.

\section{Discussion}

This paper introduces a modular causal transformer
framework that combines representation learning, one-way
cross-attention, and cutting feedback for treatment effect estimation.
The proposed design aims to preserve the usefulness of treatment
information for confounding adjustment while avoiding undesirable
reverse influence from outcome learning to treatment representation.

At the same time, several limitations should be acknowledged. First, in relatively simple settings, the added
complexity of the model may not translate into substantial improvement
over simpler approaches. Richer architectures can better capture complex
structure, but may also overfit or underperform when the true mechanism
is simple or when the available sample size is limited \citep{hastie2009}.  Second,
the validity of the MOCA still relies on standard identification
assumptions such as consistency, ignorability, and positivity. The model
cannot fully correct for hidden confounding when important variables are
entirely unobserved. Third, all individuals share a common frozen treatment representation. While this design stabilizes treatment-effect estimation at the population level, it may introduce additional noise in individual-level effect estimation. Motivated by this observation, one of our future directions is to combine MOCA with individual level treatment representation  to estimate uncertainty-aware ITEs rather than relying solely on point predictions. In
addition, future work could also explore methods that combine the proposed
architecture with sensitivity analysis or proxy-variable strategies to
better address hidden confounding.

\newpage
\section{Tables and Figures}

\begin{table}[!ht]
\centering
\caption{Summary of the simulation scenarios.}
\label{tab:simulation_scenarios}
\small
\setlength{\tabcolsep}{4pt}
\renewcommand{\arraystretch}{1.12}

\begin{tabular}{
>{\raggedright\arraybackslash}p{2.8cm}
>{\raggedright\arraybackslash}p{3.5cm}
>{\raggedright\arraybackslash}p{3.5cm}
>{\raggedright\arraybackslash}p{4.0cm}
}
\hline
Scenario
&
Treatment mechanism
&
Outcome mechanism
&
Primary purpose
\\
\hline

Linear
&
Linear
&
Linear
&
Correctly specified baseline
\\

Linear-Omis
&
Linear
&
Linear with omitted \(U\)
&
Outcome model misspecification
\\

Nonlinear
&
Nonlinear
&
Nonlinear
&
Nonlinear structures
\\

Heavy-tailed
&
\(t\)-distributed
&
\(t\)-distributed
&
Heavy-tailed covariates
\\

Hidden confounding
&
Unobserved \(U\)
&
Unobserved \(U\)
&
Single unobserved confounder
\\

Hidden confounding, multiple-\(U\)
&
Multiple unobserved \(U\)
&
Multiple unobserved \(U\)
&
Multiple unobserved confounders
\\

High-dimensional
&
Linear
&
Linear with 300 covariates
&
High-dimensional estimation and regularization
\\

\hline
\end{tabular}
\end{table}

\begin{table}[!ht]
\centering
\caption{Summary simulation results with a test-sample size of 100.}
\label{tab:simulation-summary-transposed}

\setlength{\tabcolsep}{3pt}
\renewcommand{\arraystretch}{0.95}

\resizebox{\linewidth}{!}{%
\begin{tabular}{llccccccccc}
\toprule
Scenario & Metric
& IPW
& AIPW
& X-Learner
& MOCA
& TARNet
& DragonNet
& BART
& CausalForest
& Do\_PFN \\
\midrule

Linear
& ATE Bias
& -5.128 & -0.692 & 0.044 & 0.174 & 0.894 & 0.750 & 1.067 & 1.866 & 0.235 \\
& ATE RMSE
& 6.536 & 1.089 & 0.303 & 0.459 & 0.901 & 0.758 & 1.074 & 1.866 & 0.436 \\
& CATE RMSE
& -- & -- & 0.751 & 2.390 & 1.897 & 1.814 & 1.907 & 3.072 & 2.234 \\
\midrule

Linear-Omis
& ATE Bias
& -2.039 & -0.473 & 1.075 & 0.204 & 0.799 & 0.753 & 0.845 & 1.560 & 0.821 \\
& ATE RMSE
& 3.405 & 1.309 & 1.133 & 0.579 & 0.805 & 0.768 & 0.847 & 1.560 & 0.842 \\
& CATE RMSE
& -- & -- & 3.519 & 2.313 & 1.949 & 1.919 & 1.817 & 2.909 & 2.439 \\
\midrule

Non-linear
& ATE Bias
& -0.311 & 1.350 & 0.778 & -0.214 & 0.996 & 0.901 & 0.901 & 1.297 & 0.414 \\
& ATE RMSE
& 2.723 & 1.947 & 0.778 & 0.521 & 0.996 & 0.916 & 0.901 & 1.297 & 0.597 \\
& CATE RMSE
& -- & -- & 1.777 & 2.621 & 2.092 & 2.052 & 2.152 & 2.891 & 2.720 \\
\midrule

\(t\)-Distribution
& ATE Bias
& -23.019 & -1.171 & 0.025 & 0.131 & 1.334 & 1.180 & 1.690 & 2.573 & 0.240 \\
& ATE RMSE
& 24.215 & 5.827 & 0.244 & 0.615 & 1.341 & 1.196 & 1.690 & 2.573 & 0.533 \\
& CATE RMSE
& -- & -- & 0.803 & 3.603 & 3.020 & 2.919 & 3.410 & 4.483 & 3.509 \\
\midrule

Hidden Confounding
& ATE Bias
& 0.343 & 2.018 & 1.661 & 0.712 & 2.558 & 2.553 & 2.672 & 3.008 & 1.668 \\
& ATE RMSE
& 3.525 & 3.124 & 1.661 & 0.799 & 2.558 & 2.553 & 2.672 & 3.008 & 1.668 \\
& CATE RMSE
& -- & -- & 2.357 & 3.092 & 3.493 & 3.488 & 3.610 & 4.251 & 3.198 \\
\midrule

Hidden Conf. MultiU
& ATE Bias
& 1.381 & 0.917 & 2.445 & 1.121 & 3.274 & 3.256 & 3.384 & 3.551 & 2.225 \\
& ATE RMSE
& 3.801 & 3.766 & 2.445 & 1.138 & 3.274 & 3.256 & 3.384 & 3.551 & 2.225 \\
& CATE RMSE
& -- & -- & 3.112 & 3.284 & 4.141 & 4.107 & 4.254 & 4.687 & 3.604 \\
\midrule

High Dimension
& ATE Bias
& 31.348 & -14.692 & 4.126 & 0.063 & 3.731 & 3.721 & 4.506 & 4.187 & -1.022 \\
& ATE RMSE
& 31.822 & 22.117 & 4.126 & 0.624 & 3.731 & 3.721 & 4.506 & 4.187 & 1.031 \\
& CATE RMSE
& -- & -- & 5.710 & 3.956 & 5.362 & 5.344 & 6.116 & 5.741 & 4.039 \\

\bottomrule
\end{tabular}%
}

\end{table}

\begingroup
\small
\setlength{\tabcolsep}{3pt}
\begin{longtable}{lcccc}
\caption{Ablation study of MOCA.}\label{tab:ablation}\\
\toprule
Ablation &
ATE Bias Mean &
ATE Bias SD &
ATE RMSE &
CATE RMSE \\
\midrule
\endfirsthead

\multicolumn{5}{c}%
{{\tablename\ \thetable{} -- continued from previous page}} \\
\toprule
Ablation &
ATE Bias Mean &
ATE Bias SD &
ATE RMSE &
CATE RMSE \\
\midrule
\endhead

\midrule
\multicolumn{5}{r}{{Continued on next page}} \\
\endfoot

\bottomrule
\endlastfoot

MOCA &
-0.029 &
0.815 &
0.625 &
3.586 \\

Concatenation Fusion &
0.796 &
0.691 &
0.906 &
3.578 \\

Remove Linear Shortcut &
-0.157 &
0.749 &
0.602 &
3.571 \\

Remove Treatment Branch &
0.157 &
0.863 &
0.689 &
3.679 \\

Without Cutting Feedback &
0.994 &
0.953 &
1.115 &
3.533 \\

Two-way with Cutting Feedback &
1.001 &
0.962 &
1.159 &
3.545 \\

Two-way without Cutting Feedback &
-0.236 &
0.737 &
0.610 &
3.562 \\

\end{longtable}
\endgroup

\begin{table}[H]
\caption{Results for the IHDP and DW real-data analyses.}
\label{tab:real-data}
\centering
\resizebox{\textwidth}{!}{%
\begin{tabular}{l l c c c c c}
\toprule
\multicolumn{1}{c}{Dataset} & Method & ATE & ATE Bias & Pooling ATE & Pooling ATE Bias & 95\% CI \\
\midrule

\multirow{9}{*}{IHDP Data}
& IPW           & 8.866 & 4.911  & ---   & ---    & (0.60, 17.13) \\
& AIPW          & 3.325 & -0.630 & ---   & ---    & (1.80, 4.85) \\
& X-Learner     & 3.861 & -0.094 & 3.905 & -0.061 & (3.72, 4.08) \\
& MOCA          & 3.961 & 0.006  & 3.991 & 0.025  & (3.69, 4.29) \\
& TARNet        & 3.791 & -0.165 & 4.025 & 0.060  & (3.58, 4.47) \\
& DragonNet     & 3.837 & -0.119 & 3.927 & -0.039 & (3.36, 4.50) \\
& BART          & 3.876 & -0.079 & 3.899 & -0.067 & (3.71, 4.09) \\
& Causal Forest & 3.821 & -0.134 & 3.855 & -0.110 & (3.54, 4.17) \\
& Do\_PFN        & 2.792 & -1.163 & 2.599 & -1.367 & (2.18, 3.02) \\

\midrule

\multirow{9}{*}{DW Data}
& IPW           & 1067.148 & --- & ---      & --- & (-2277.80, 4412.10) \\
& AIPW          & 784.215  & --- & ---      & --- & (-1746.59, 3315.02) \\
& X-Learner     & 2216.231 & --- & 1754.876 & --- & (210.45, 3299.30) \\
& MOCA          & 633.005  & --- & 610.776  & --- & (-163.01, 1384.56) \\
& TARNet        & 1850.076 & --- & 1161.621 & --- & (24.02, 2299.22) \\
& DragonNet     & 1525.850 & --- & 1262.497 & --- & (209.60, 2315.39) \\
& BART          & 1902.926 & --- & 1740.527 & --- & (377.02, 3104.03) \\
& Causal Forest & 1371.187 & --- & 1555.570 & --- & (721.52, 2389.62) \\
& Do\_PFN        & 1740.946 & --- & 1211.340 & --- & (364.38, 2058.30) \\

\bottomrule
\end{tabular}%
}
\end{table}

\newpage
\begin{figure}[htbp]
    \centering
    \includegraphics[width=5.0in,height=2.8in]{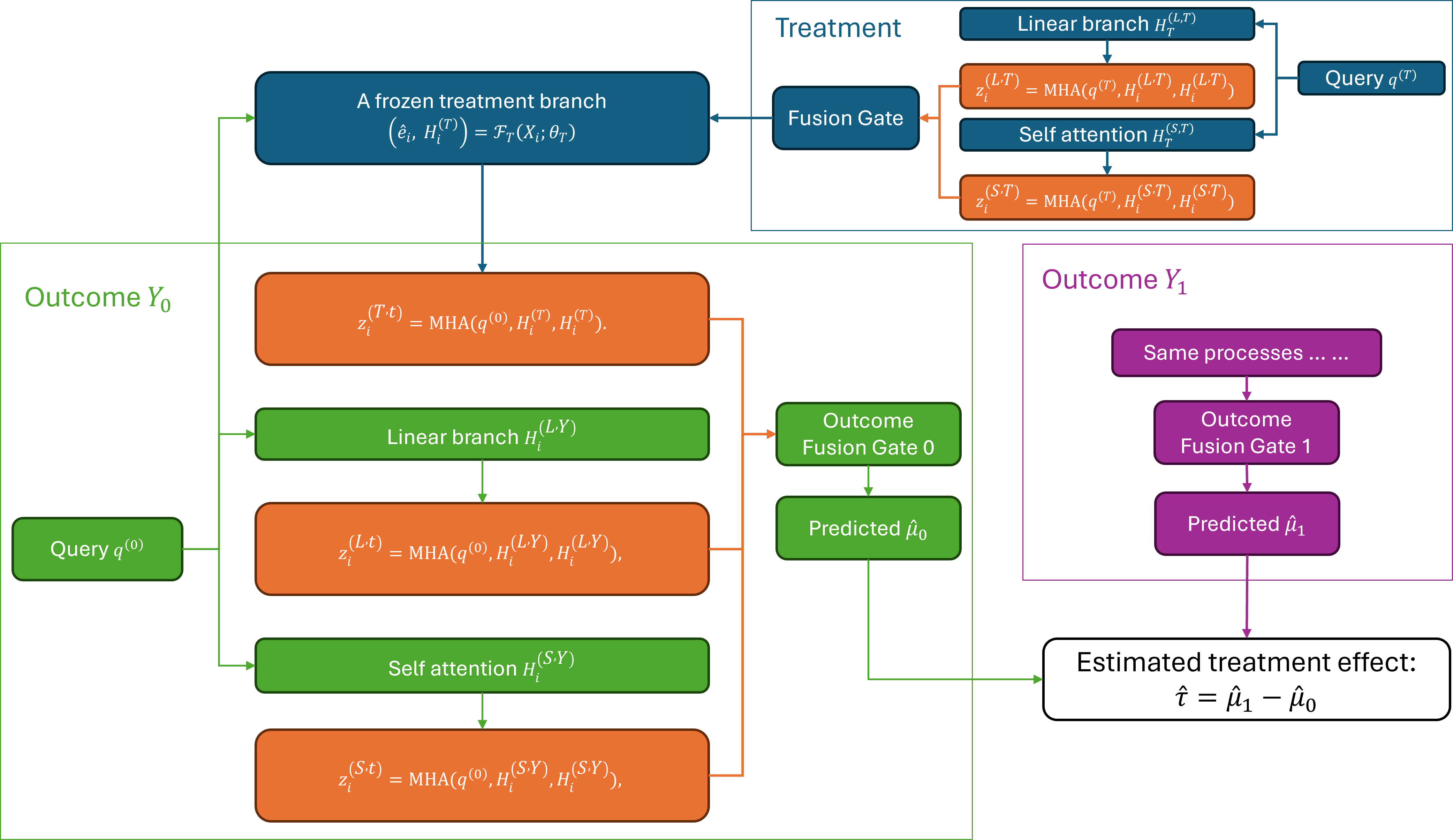}
    \caption{Illustration of the MOCA framework with one-way attention.}\label{fig:moca-framework}
\end{figure}

\begin{figure}[H]
    \centering
    \includegraphics[width=0.55\linewidth]{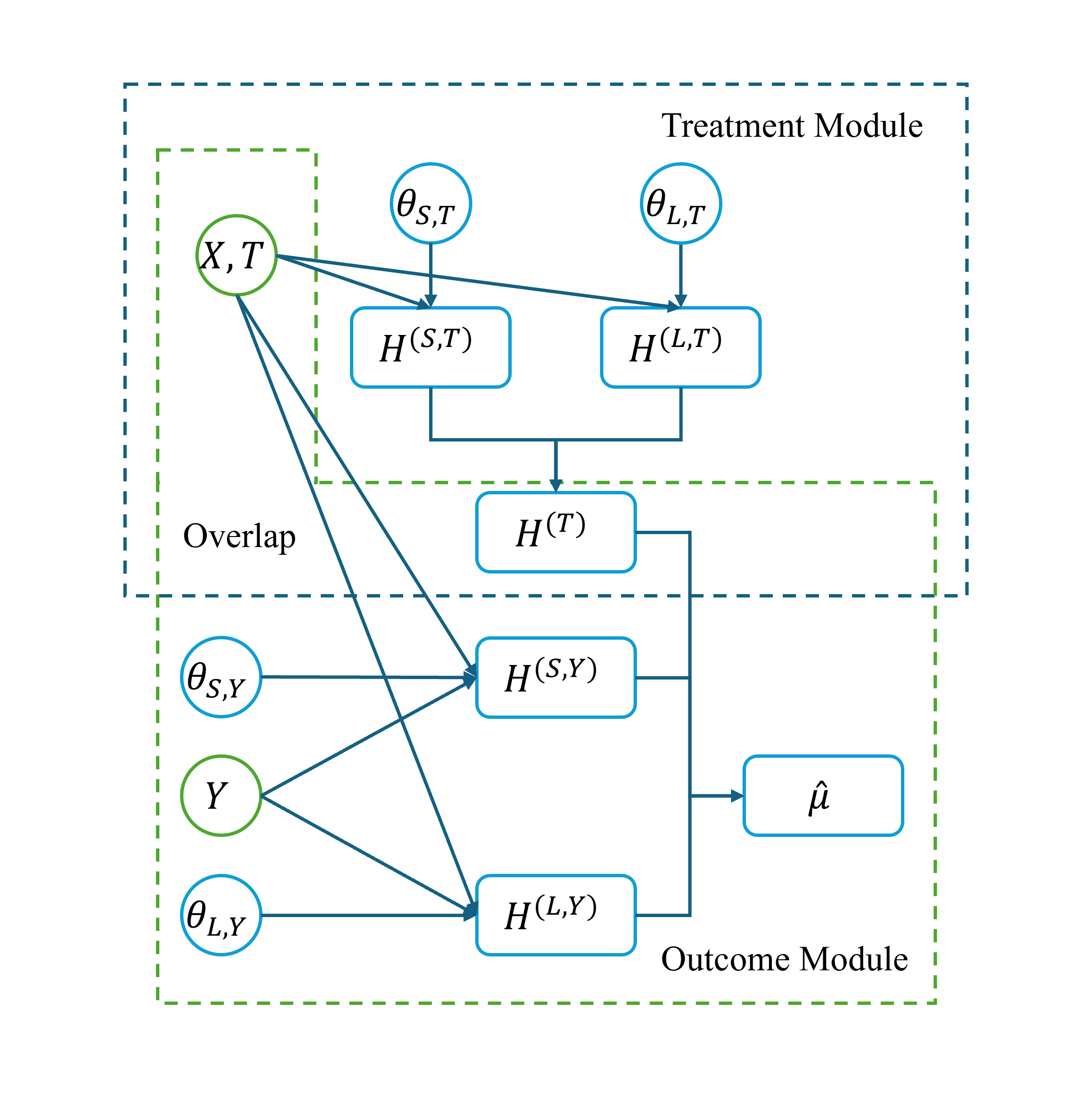}
    \caption{Information flow in MOCA with treatment and outcome modules.}\label{fig:moca-flow}
\end{figure}

\begin{figure}[htbp]
    \centering
    \includegraphics[width=1\linewidth]{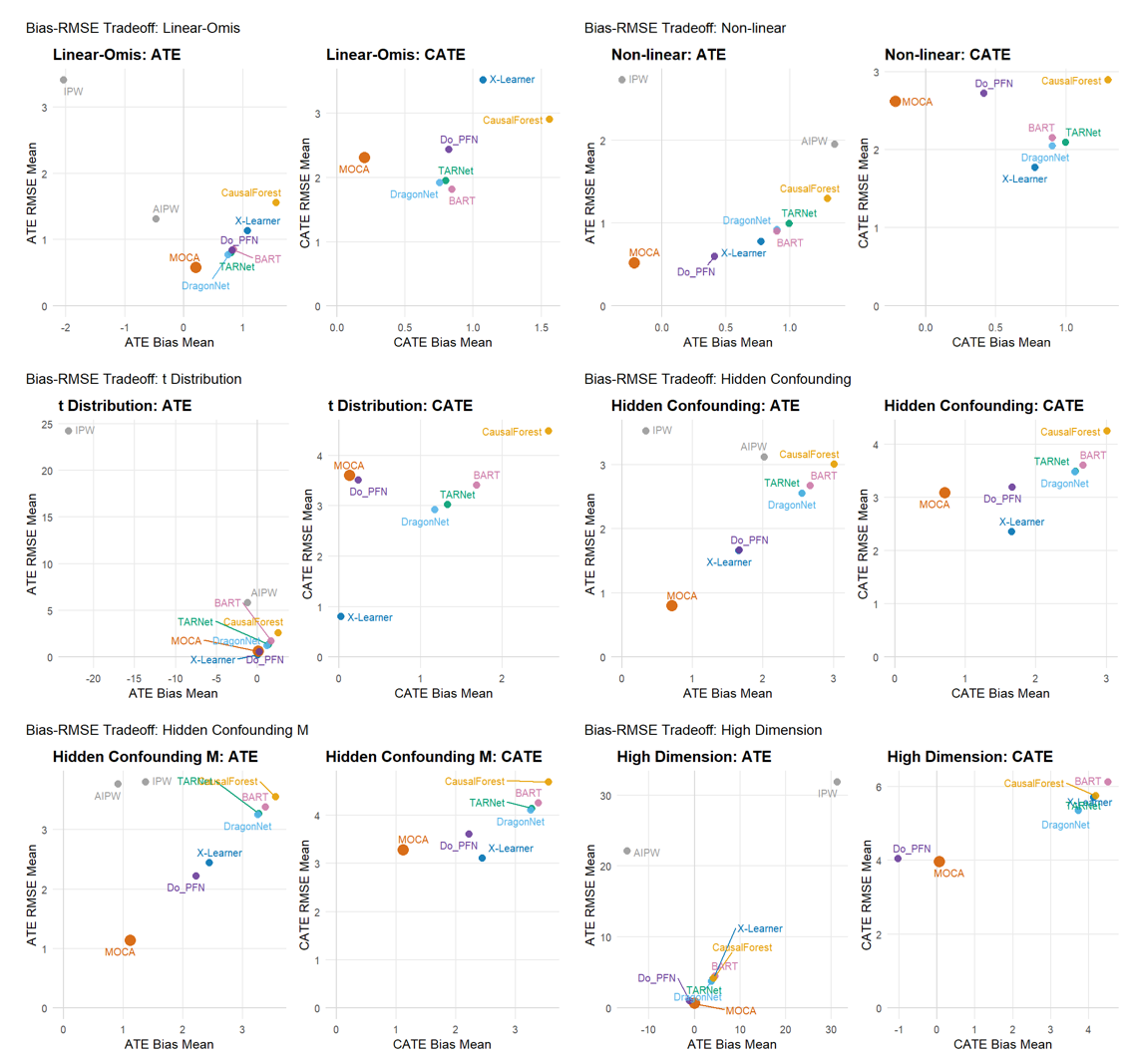}
    \caption{Bias-RMSE tradeoff for simulation results}\label{fig:moca-results}
\end{figure}

\clearpage
\newpage

\clearpage
\appendix
\begin{center}
{\LARGE\bfseries Supplementary Material for:}\\[0.75em]
{\Large\bfseries MOCA: A Transformer-based Modular Causal Inference Framework with One-way Cross-attention and Cutting Feedback}
\end{center}
\vspace{1.5em}

\setcounter{section}{0}
\setcounter{subsection}{0}
\renewcommand{\thesection}{S\arabic{section}}
\renewcommand{\thesubsection}{S\arabic{section}.\arabic{subsection}}
\renewcommand{\theHsection}{supp.S\arabic{section}}
\renewcommand{\theHsubsection}{supp.S\arabic{section}.\arabic{subsection}}
\setcounter{table}{0}
\renewcommand{\thetable}{S\arabic{table}}
\renewcommand{\theHtable}{supp.S\arabic{table}}
\setcounter{figure}{0}
\renewcommand{\thefigure}{S\arabic{figure}}
\renewcommand{\theHfigure}{supp.S\arabic{figure}}

\section{Related Work}
\subsection{Classical Causal Inference Estimators}

The propensity score is defined as the conditional probability of
treatment assignment given observed covariates,
\[e(x) = P(T = 1 \mid X = x).\]

By balancing covariates between treated and control groups, the
propensity score can be used for matching and weighting. A common
weighting estimator is inverse probability weighting (IPW), \citep{robins2000}
which estimates the ATE as
\[{\widehat{\tau}}_{IPW} = \frac{1}{n}\sum_{i = 1}^{n}\left( \frac{T_{i}Y_{i}}{\widehat{e}(X_{i})} - \frac{(1 - T_{i})Y_{i}}{1 - \widehat{e}(X_{i})} \right).
\]

A doubly robust estimator combines propensity score estimation and
outcome regression. One example is the augmented inverse probability
weighting (AIPW) estimator \citep{bang2005}:
\[{\widehat{\tau}}_{AIPW} = \frac{1}{n}\sum_{i = 1}^{n}\left\lbrack {\widehat{\mu}}_{1}(X_{i}) - {\widehat{\mu}}_{0}(X_{i}) + \frac{T_{i}\{ Y_{i} - {\widehat{\mu}}_{1}(X_{i})\}}{\widehat{e}(X_{i})} - \frac{(1 - T_{i})\{ Y_{i} - {\widehat{\mu}}_{0}(X_{i})\}}{1 - \widehat{e}(X_{i})} \right\rbrack.\]

This estimator is consistent if either the treatment model or the
outcome regression model is correctly specified, which makes it doubly robust.

\subsection{Machine Learning Based Causal Effect Estimation}

Meta-learning approaches adapt supervised learning algorithms to estimate treatment effects. The T-learner fits two separate outcome models, using the control and treated samples respectively, and estimates the treatment effect by
\(\widehat{\tau}(x) = {\widehat{\mu}}_{1}(x) - {\widehat{\mu}}_{0}(x).\)

This approach is simple and flexible, but may suffer when one treatment
group has limited sample size. The X-learner improves upon the T-learner
by first imputing pseudo-treatment effects and then learning treatment
effect functions separately in each group, often achieving better
performance under treatment imbalance \citep{kunzel2019}. In our analysis, we will compare to X-learner since it considers the treatment adjustments. 

Representation learning methods for causal inference seek to learn
latent covariate representations that are useful for estimating both
treatment assignment and potential outcomes. TARNet and DragonNet are
two examples.

Let \(\Phi(X)\) denote a learned representation of the covariates.
TARNet predicts the two potential outcome surfaces through separate
heads \citep{johansson2016}:
\[{\widehat{\mu}}_{0}(X) = h_{0}(\Phi(X)),{\widehat{\mu}}_{1}(X) = h_{1}(\Phi(X)).\]

The factual prediction is
\[\widehat{Y} = T{\widehat{\mu}}_{1}(X) + (1 - T){\widehat{\mu}}_{0}(X),\]

and the training objective is typically based on factual outcome loss,
\[\mathcal{L}_{TARNet} = \frac{1}{n}\sum_{i = 1}^{n}\ell\!\left( Y_{i},\text{ }T_{i}{\widehat{\mu}}_{1}(X_{i}) + (1 - T_{i}){\widehat{\mu}}_{0}(X_{i}) \right),
\]

where \(\ell(\cdot,\cdot)\) is usually mean squared error for continuous outcomes or cross-entropy for binary outcomes.

DragonNet extends TARNet by adding an explicit treatment prediction
head \citep{shi2019}. In addition to the shared representation \(\Phi(X)\) and
the outcome heads \(h_{0}(\Phi(X))\) and \(h_{1}(\Phi(X))\), DragonNet
estimates the propensity score through
\[\widehat{e}(X) = g(\Phi(X)).
\] Its objective combines outcome prediction loss and treatment
classification loss:
\[\mathcal{L}_{DragonNet} = \frac{1}{n}\sum_{i = 1}^{n}\lbrack\ell\!\left( Y_{i},\text{ }T_{i}{\widehat{\mu}}_{1}(X_{i}) + (1 - T_{i}){\widehat{\mu}}_{0}(X_{i}) \right) + \alpha\text{ }\ell\!\left( T_{i},\widehat{e}(X_{i}) \right)\rbrack,\]

where \(\ell_{t}\) is the treatment cross-entropy loss,
\[\ell\!_{t}(T_{i},\widehat{e}(X_{i})) = - \lbrack T_{i}\log\widehat{e}(X_{i}) + (1 - T_{i})\log(1 - \widehat{e}(X_{i}))\rbrack.
\]

While this shared-representation strategy can improve statistical
efficiency, the treatment and outcome tasks are still optimized jointly
through the same latent representation. Since the treatment
classification loss is included in the joint objective, gradient updates
will pass information from outcome components to the treatment
components. From a causal modeling perspective, this inverse flow should
be avoided.

\subsection{Tree based methods}
Bayesian Additive Regression Trees (BART) is a nonparametric Bayesian ensemble method that models the response as a sum of many weak regression trees \citep{chipman2010bart}. The outcome model is represented as

\[
Y_i = \sum_{m=1}^{M} g(X_i; T_m, \mu_m) + \varepsilon_i,
\qquad
\varepsilon_i \sim N(0,\sigma^2),
\]

where $T_m$ denotes the structure of the $m$-th tree and $\mu_m$ represents its terminal node parameters. BART is widely used in observational studies because it can flexibly capture nonlinear relationships and complex interactions without requiring explicit model specification.

Causal Forest extends Random Forest methodology to estimate heterogeneous treatment effects directly. The method partitions the covariate space into local neighborhoods and estimates treatment effects within each leaf node \citep{wager2018causalforest}.

By averaging treatment effects across many randomized trees, Causal Forest provides a flexible nonparametric estimator of conditional average treatment effects (CATEs) while reducing overfitting through sample splitting and honesty constraints.

\subsection{Transformer for Tabular Data}

Transformer and attention-based architectures have recently shown strong
performance in tabular modeling because they can flexibly capture
nonlinear dependencies and higher-order feature interactions \citep{wang2025}. 
Such flexibility is potentially valuable for causal inference, however,
the application of transformers in causal inference is still a new
attempt. 

For example, utilizing Transformers to capture complex,
long-range dependencies among time-varying confounders \citep{melnychuk2022}. Do-Probabilistic Foundation Networks (Do\_PFN) is a transformer-based foundation model designed for causal effect estimation. Instead of learning from a single dataset, Do\_PFN is pre-trained on a large collection of synthetically generated causal inference tasks and subsequently applied to new datasets without extensive task-specific retraining \citep{robertson2026dopfn}.

\subsection{Variance estimation for MOCA using Rubin-style pooling.}
In the real data analysis, machine learning based methods like TARNet, and DragonNet mainly provide point estimates and do not natively output variance estimates. In our work, we adopted a Rubin-style pooling rule \citep{barnard1999} to quantify uncertainty for the proposed MOCA estimator. For the $j$-th MOCA run, we defined the predicted CATE for subject $i$ as
\[
\hat{\tau}_{ij}(X_i) = \hat{\mu}_{1,j}(X_i) - \hat{\mu}_{0,j}(X_i),
\]
where $\hat{\mu}_{1,j}(X_i)$ and $\hat{\mu}_{0,j}(X_i)$ are the model-predicted potential outcomes under treatment and control, respectively. The corresponding ATE estimate from the $j$-th run was then computed as the sample mean of the predicted CATE, \(\hat{\tau}_j = \frac{1}{n}\sum_{i=1}^n \hat{\tau}_{ij}.\) To quantify the within-run uncertainty, we estimated the variance of $\hat{\tau}_j$ using the sample variance of the predicted CATE,
\[
U_j
=
\widehat{\mathrm{Var}}(\hat{\tau}_j)
=
\frac{1}{n}\cdot \frac{1}{n-1}\sum_{i=1}^n (\hat{\tau}_{ij} - \bar{\tau}_j)^2,
\] 

where \(\bar{\tau}_j = \frac{1}{n}\sum_{i=1}^n \hat{\tau}_{ij}.\)
Thus, $U_j$ directly reflects the variability of the predicted treatment effects within the $j$-th fitted MOCA model. After repeating MOCA fitting $m$ times, yielding $(\hat{\tau}_1, U_1), \ldots, (\hat{\tau}_m, U_m)$, we combined the results using a Rubin-style pooling rule. Specifically, the pooled treatment effect estimate was \(
\bar{\tau} = \frac{1}{m}\sum_{j=1}^m \hat{\tau}_j,
\) the average within-run variance was
\(
\bar{U} = \frac{1}{m}\sum_{j=1}^m U_j,
\)
and the between-run variance was
\[B = \frac{1}{m-1}\sum_{j=1}^m (\hat{\tau}_j - \bar{\tau})^2.
\]
The final pooled variance was then calculated as
\[
T = \bar{U} + \left(1 + \frac{1}{m}\right)B.
\]
In our real data analysis, we also applied this pooling process for other machine learning methods.

\section{Simulation Setting}

The data generation mechanisms are based on a classical potential-outcome
framework. For each individual \(i = 1,\ldots,n\), let
\(X_i = (X_{i1},\ldots,X_{ip})^\top\) denote the covariate vector,
\(T_i \in \{0,1\}\) the treatment assignment, and
\(Y_i(0),Y_i(1)\) the potential outcomes. The observed outcome is
\[
Y_i = T_iY_i(1) + (1-T_i)Y_i(0).
\]
Across all scenarios, treatment is generated from
\[
T_i \sim \mathrm{Bernoulli}(e_i), \qquad
e_i = \Pr(T_i=1\mid X_i)=\mathrm{clip}\{\mathrm{expit}(\eta_i),0.03,0.97\},
\]
where \(\mathrm{expit}(x)=1/(1+\exp(-x))\). Potential outcomes are
generated as
\[
Y_i(0)=\mu_0(X_i)+\varepsilon_{i0}, \qquad
Y_i(1)=\mu_1(X_i)+\varepsilon_{i1}, \qquad
\mu_1(X_i)=\mu_0(X_i)+\tau(X_i), \qquad
\varepsilon_{i0},\varepsilon_{i1}\overset{i.i.d.}{\sim}N(0,1).
\]

Thus, the true conditional average treatment effect is\(\tau(X_i)=\mu_1(X_i)-\mu_0(X_i).\)
Unless otherwise specified, observed covariates are generated independently
from standard normal distributions:

\paragraph{Linear}
In the linear scenario, \(p=5\). The treatment propensity is specified by
\[
\eta_i
=
-0.15
+2.0X_{i1}
-1.8X_{i2}
+1.5X_{i3}
+1.2X_{i4}
-1.0X_{i5}.
\]
The baseline outcome mean and treatment effect are
\[
\mu_0(X_i)
=
1.0
+0.8X_{i1}
-0.4X_{i2}
+0.6X_{i3}
+0.4X_{i4}
-0.2X_{i5},
\]
\[
\tau(X_i)
=
1.0
+1.6X_{i1}
-1.3X_{i2}
+1.1X_{i3}
+0.9X_{i4}
-0.7X_{i5}.
\]

\paragraph{Linear-Omis}
The Linear-Omis scenario is constructed to be identical to the linear
scenario, except that the outcome additionally depends on an unobserved
variable \(U_i\). Specifically, \(p=5\),
\[
X_{ij} \overset{\mathrm{i.i.d.}}{\sim} N(0,1),
\qquad
U_i \sim N(0,1),
\qquad
U_i \perp X_i.
\]
The variable \(U_i\) is used in the data generating process but is not
observed by any estimator.

The treatment process remains the same as in the linear scenario. The baseline outcome mean conditional on \(X_i\) and \(U_i\) is
\[
\mu_0^{\mathrm{Omis}}(X_i,U_i)
=
1.0
+0.8X_{i1}
-0.4X_{i2}
+0.6X_{i3}
+0.4X_{i4}
-0.2X_{i5}
+U_i.
\]
Thus, the potential outcomes are generated as
\[
Y_i(0)
=
\mu_0^{\mathrm{Omis}}(X_i,U_i)
+\varepsilon_{i0},
\]
\[
Y_i(1)
=
\mu_0^{\mathrm{Omis}}(X_i,U_i)
+\tau(X_i)
+\varepsilon_{i1},
\]
where
\[
\varepsilon_{i0},\varepsilon_{i1}
\overset{\mathrm{i.i.d.}}{\sim}
N(0,\sigma^2).
\]
Equivalently,
\[
Y_i(t)
=
\mu_0(X_i)
+t\,\tau(X_i)
+U_i
+\varepsilon_{it},
\qquad t\in\{0,1\}.
\]

\paragraph{Nonlinear}
In the nonlinear scenario, \(p=8\). Define
\[
s_{1i}=\sin(1.6X_{i1}), \qquad
s_{2i}=\tanh(1.2X_{i2}X_{i3}),
\]
\[
s_{3i}=\mathrm{sign}(X_{i4})\log(1+X_{i4}^2), \qquad
s_{4i}=\tanh(1.5X_{i5}),
\]
\[
h_i=\sin(X_{i1}X_{i3})+0.5\cos(X_{i2}-X_{i5}),
\]
and
\[
\ell_i=1.5X_{i6}-1.2X_{i7}+0.8X_{i8}.
\]
The treatment propensity is generated from
\[
\eta_i
=
-0.05
+3.0s_{1i}
-2.6s_{2i}
+2.3s_{3i}
+2.0s_{4i}
+1.2h_i
+0.6\ell_i.
\]
The baseline outcome mean and treatment effect are
\[
\mu_0(X_i)
=
1.0
+0.25\cos(X_{i1})
+0.18X_{i2}
-0.18X_{i3}X_{i5}
+0.15(X_{i4}^2-1)
+1.2X_{i6}
-0.8X_{i7},
\]
\[
\tau(X_i)
=
1.0
+1.7s_{1i}
-1.3s_{2i}
+1.1s_{3i}
+0.9s_{4i}
+0.7h_i
+\ell_i.
\]
This design introduces nonlinear main effects and interactions in both
the treatment and outcome models.

\paragraph{Hidden confounding}
In this scenario, \(p=8\), and an unobserved confounder
\(U_i\sim N(0,1)\) affects treatment assignment and potential outcomes.
Define
\[
a_i
=
1.6X_{i1}
-1.3X_{i2}
+1.1X_{i3}
+0.9X_{i4}
-0.7X_{i5},
\]
\[
b_i
=
1.2X_{i6}
-0.8X_{i7}
+0.6X_{i8}.
\]
The treatment propensity is
\[
\eta_i
=
-0.15
+1.1a_i
+0.5b_i
+1.2U_i.
\]
The baseline outcome mean and treatment effect are
\[
\mu_0(X_i,U_i)
=
1.0
+0.7X_{i1}
-0.3X_{i2}
+0.5X_{i3}
+0.4X_{i4}
+0.8X_{i6}
-0.5X_{i7}
+1.5U_i,
\]
\[
\tau(X_i,U_i)
=
1.0+a_i+b_i+0.8U_i.
\]
Because \(U_i\) is omitted from the observed covariates used for
estimation, the ignorability assumption is violated.

\paragraph{Hidden confounding (multiple-U)}
Here, multiple latent confounders affect both treatment and outcomes:
\[
U_{i1},U_{i2},U_{i3}\overset{i.i.d.}{\sim}N(0,1).
\]
Using the same \(a_i\) and \(b_i\) as above, the treatment propensity is
\[
\eta_i
=
-0.15
+1.1a_i
+0.5b_i
+1.2U_{i1}
+0.8U_{i2}
-0.6U_{i3}.
\]
The baseline outcome mean and treatment effect are
\[
\mu_0(X_i,U_i)
=
1.0
+0.7X_{i1}
-0.3X_{i2}
+0.5X_{i3}
+0.4X_{i4}
+0.8X_{i6}
-0.5X_{i7}
+1.5U_{i1}
+1.0U_{i2}
-0.5U_{i3},
\]
\[
\tau(X_i,U_i)
=
1.0+a_i+b_i+0.8U_{i1}+0.5U_{i2}-0.3U_{i3}.
\]
Again, ignorability is violated, but more severely than in the
single-\(U\) case.

\paragraph{Heavy-tailed covariates}
This scenario has the same structural form as the linear scenario, but
the covariates are generated from a heavy-tailed distribution:
\[
X_{ij}\overset{i.i.d.}{\sim}t_3,\qquad j=1,\ldots,5.
\]
The treatment propensity is
\[
\eta_i
=
-0.2
+1.8X_{i1}
-1.6X_{i2}
+1.3X_{i3}
+1.0X_{i4}
-0.9X_{i5}.
\]
The baseline outcome mean and treatment effect are
\[
\mu_0(X_i)
=
1.0
+0.6X_{i1}
-0.3X_{i2}
+0.5X_{i3}
+0.3X_{i4}
-0.2X_{i5},
\]
\[
\tau(X_i)
=
1.0
+1.4X_{i1}
-1.1X_{i2}
+0.9X_{i3}
+0.7X_{i4}
-0.6X_{i5}.
\]
This setting preserves a relatively simple structural form while making
estimation harder through heavy-tailed covariates.

\paragraph{High-dimensional}
In the high-dimensional scenario, \(p=300\), with
\[
X_{ij}\overset{i.i.d.}{\sim}N(0,1),\qquad j=1,\ldots,300.
\]
Define
\[
a_i
=
1.8X_{i1}
-1.6X_{i2}
+1.4X_{i3}
+1.1X_{i4}
-0.9X_{i5},
\]
\[
b_i
=
1.2X_{i6}
-1.0X_{i7}
+0.8X_{i8}
+0.6X_{i9}
-0.5X_{i10}.
\]
Let
\[
\beta_0,\beta_\tau \sim N(0,0.1^2I_{300}).
\]
The treatment propensity is
\[
\eta_i=-1.0+a_i+0.4b_i.
\]
The baseline outcome mean and treatment effect are
\[
\mu_0(X_i)
=
1.0+0.4a_i+0.3b_i+X_i^\top\beta_0,
\]
\[
\tau(X_i)
=
1.0+a_i+0.6b_i+X_i^\top\beta_\tau.
\]
This scenario is designed to evaluate performance when the covariate
space is high-dimensional and the outcome surface is driven by a mixture
of sparse and dense signals.

\section{Ablation experiments}

\subsection{Ablation variants}
\paragraph{Remove Linear Branch}
In this variant, the linear branch is removed, and outcome prediction relies solely on the self-attention representation. This experiment evaluates whether explicitly modeling low-order additive effects improves treatment effect estimation. 
\paragraph{Remove Treatment Branch}
The treatment module is completely removed. The outcome module receives only the covariate representation produced by its own encoder and no longer accesses treatment-related representations through cross-attention. This variant tests whether treatment representations learned from the propensity model provide useful information beyond the observed covariates themselves.
\paragraph{Concatenation Fusion}
MOCA combines the outcome representation and treatment representation using gated fusion. In this variant, the gating mechanism is replaced by simple concatenation. This experiment evaluates whether adaptive feature weighting through gated fusion is necessary, or whether a simpler fusion strategy is sufficient.
\paragraph{One-way Cross-attention without Cutting Feedback}
The architecture retains one-way cross-attention from the outcome module to the treatment module, but gradients from the outcome loss are allowed to update treatment-module parameters. This variant isolates the effect of the cutting-feedback mechanism. It evaluates whether preventing outcome information from influencing treatment representation learning improves causal estimation.
\paragraph{Two-way Cross-attention with Cutting Feedback}
Cross-attention is applied in both directions between treatment and outcome. However, gradient blocking is still imposed when information flows from the outcome module back to the treatment module. This variant investigates whether violating the intended one-way information flow affects causal estimation even when gradient contamination is prevented.
\paragraph{Two-way Cross-attention without Cutting Feedback}
Cross-attention is applied in both directions and all gradients are propagated freely throughout the network. This variant represents the strongest violation of the modular causal design.
\subsection{Ablation simulation setting}
The ablation experiments were conducted under a nonlinear simulation setting.
For each replication, we generated
\(n_{\mathrm{train}}=100\), \(n_{\mathrm{valid}}=100\), and
\(n_{\mathrm{test}}=100\) samples, with \(R=50\) independent replications.
The covariate dimension was set to \(p=8\), and covariates were generated as
\(X_{ij}\overset{i.i.d.}{\sim}N(0,1)\). Treatment assignment followed
\[
T_i\sim \mathrm{Bernoulli}(e_i),\qquad
e_i=\mathrm{clip}\{\mathrm{expit}(\eta_i),0.03,0.97\}.
\]
Define
\[
s_{1i}=\sin(1.6X_{i1}),\quad
s_{2i}=\tanh(1.2X_{i2}X_{i3}),
\]
\[
s_{3i}=\mathrm{sign}(X_{i4})\log(1+X_{i4}^2),\quad
s_{4i}=\tanh(1.5X_{i5}),
\]
\[
h_i=\sin(X_{i1}X_{i3})+0.5\cos(X_{i2}-X_{i5}),
\]
and
\[
\ell_i=1.5X_{i6}-1.2X_{i7}+0.8X_{i8}.
\]
The treatment assignment mechanism was
\[
\eta_i
=
-0.05
+3.0s_{1i}
-2.6s_{2i}
+2.3s_{3i}
+2.0s_{4i}
+1.2h_i
+0.6\ell_i.
\]
The baseline outcome and conditional treatment effect were generated by
\[
\mu_0(X_i)
=
1.0
+0.25\cos(X_{i1})
+0.18X_{i2}
-0.18X_{i3}X_{i5}
+0.15(X_{i4}^2-1)
+1.2X_{i6}
-0.8X_{i7},
\]
and
\[
\tau(X_i)
=
1.0
+1.7s_{1i}
-1.3s_{2i}
+1.1s_{3i}
+0.9s_{4i}
+0.7h_i
+\ell_i.
\]
Finally,
\[
\mu_1(X_i)=\mu_0(X_i)+\tau(X_i),
\]
and the potential outcomes were generated as
\[
Y_i(0)=\mu_0(X_i)+\varepsilon_{i0},\qquad
Y_i(1)=\mu_1(X_i)+\varepsilon_{i1},
\]
where \(\varepsilon_{i0},\varepsilon_{i1}\overset{i.i.d.}{\sim}N(0,1)\).
The observed outcome was
\[
Y_i=T_iY_i(1)+(1-T_i)Y_i(0).
\]

\section{Theory for MOCA}
\subsection{Markov-kernel modularity and Gaussian predictive KL optimality}
\label{subsec:kernel-kl}

A Markov kernel assigns each input in one measurable space a probability measure
on another measurable space, with measurable dependence on the input
\citep{Kallenberg1997}. Markov kernels can be composed as directed stochastic
mappings, and provides an abstract framework for such
compositions and conditional products \citep{fritz2020synthetic}. In the present setting,
learning kernels describe the distribution of representations produced by
deterministic or randomized training algorithms conditional on the observed data.
Their sequential composition provides a precise way to encode directed information
flow: the treatment representation is learned first, and the outcome representation
is then learned conditional on the fixed treatment representation.

A related but distinct causal perspective is the algorithmic Markov condition of
Janzing and Sch{\"o}lkopf, which motivates causal direction through algorithmic
independence between causal mechanisms rather than through probability kernels
\citep{janzing2010causal}. Motivated by cutting feedback in modular Bayesian
analysis \citep{liu2025,frazier2025}, MOCA uses the kernel construction to
define cutting feedback at the representation level. The development below proceeds
by defining the learning kernels, constructing and validating their directed product,
establishing treatment autonomy, and connecting this factorization to staged
optimization and Gaussian predictive KL optimality.

\subsubsection{Data modules, representations, and learning kernels}

Let \(D_T=\{(X_i,T_i)\}_{i=1}^n\) denote the treatment-side data, and let \(D_O=\{Y_i\}_{i=1}^n\)
denote the additional outcome-side information. The full observed dataset is
\(D=(D_T,D_O).\)

Let $Z_T\in\mathcal{Z}_T$ denote the treatment representation and $Z_Y\in\mathcal{Z}_Y$ denote the outcome representation. Assume that all relevant measurable spaces are standard Borel spaces. Define the treatment learning kernel
\(K_T(dz_T\mid D_T)\)
and the outcome learning kernel \(K_Y(dz_Y\mid D_T,D_O,z_T)\).
The first kernel describes the distribution of the learned treatment representation conditional on treatment-side data, whereas the second describes the distribution of the learned outcome representation conditional on the observed data and the fixed treatment representation.

\subsubsection{Construction of the MOCA cut kernel}

For measurable sets $A_T\subseteq\mathcal{Z}_T$ and $A_Y\subseteq\mathcal{Z}_Y$, define the cut learning kernel by
\[K_{\mathrm{cut}}(A_T\times A_Y\mid D_T,D_O)
=
\int_{A_T}K_Y(A_Y\mid D_T,D_O,z_T)K_T(dz_T\mid D_T).
\]
In differential notation,
\begin{equation}
\label{eq:cut-kernel-density}
K_{\mathrm{cut}}(dz_T,dz_Y\mid D_T,D_O)
=
K_T(dz_T\mid D_T)K_Y(dz_Y\mid D_T,D_O,z_T).
\end{equation}
This is a directed conditional product and represents the allowed direction
\(Z_T\longrightarrow Z_Y\).

\begin{proposition}[Validity of the cut kernel]
\label{prop:cut-kernel-validity}
For each fixed $(D_T,D_O)$, $K_{\mathrm{cut}}(\cdot\mid D_T,D_O)$ is a probability measure on $\mathcal{Z}_T\times\mathcal{Z}_Y$. Moreover, for every measurable set $A\subseteq\mathcal{Z}_T\times\mathcal{Z}_Y$, the map
\[
(D_T,D_O)\longmapsto K_{\mathrm{cut}}(A\mid D_T,D_O)
\]
is measurable. Hence, $K_{\mathrm{cut}}$ is a valid Markov kernel.
\end{proposition}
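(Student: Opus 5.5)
The plan is the standard construction of a composition (semidirect product) of Markov kernels: extend the rectangle formula to a measure, then verify joint measurability. Write $\mathcal D$ for the measurable data space of $d=(D_T,D_O)$. We assume $K_T$ is a Markov kernel from the $D_T$-space to $\mathcal Z_T$, and $K_Y$ a Markov kernel from $\mathcal D\times\mathcal Z_T$ to $\mathcal Z_Y$. In particular, $(d,z_T)\mapsto K_Y(B\mid d,z_T)$ is jointly measurable for each measurable $B$; this is the ``standard measurability condition'' we need. For a general measurable $A\subseteq\mathcal Z_T\times\mathcal Z_Y$ we define
\[
K_{\mathrm{cut}}(A\mid d)=\int_{\mathcal Z_T} K_Y\bigl(A_{z_T}\mid d,z_T\bigr)\,K_T(dz_T\mid D_T),
\qquad A_{z_T}=\{z_Y:(z_T,z_Y)\in A\}.
\]
On rectangles $A_T\times A_Y$ the section is $A_Y$ or $\emptyset$, so this reduces to the displayed formula in the statement.

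\textbf{Measurability of the integrand.} The key preliminary step is to show that for every measurable $A$ the map $(d,z_T)\mapsto K_Y(A_{z_T}\mid d,z_T)$ is jointly measurable. We will use a Dynkin $\pi$–$\lambda$ argument. Let $\mathcal C$ be the class of $A$ for which this holds. For rectangles the map equals $\mathbf 1_{A_T}(z_T)\,K_Y(A_Y\mid d,z_T)$, which is a product of measurable functions, so every rectangle lies in $\mathcal C$. Rectangles form a $\pi$-system generating the product $\sigma$-algebra. The class $\mathcal C$ is closed under proper differences, because sections commute with differences and $K_Y$ is finitely additive. It is closed under increasing unions by continuity from below of $K_Y(\cdot\mid d,z_T)$, and pointwise limits of measurable functions are measurable. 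So $\mathcal C$ is a $\lambda$-system and contains the full product $\sigma$-algebra.

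\textbf{Probability measure.} Fix $d$. Nonnegativity is immediate. For total mass, $K_{\mathrm{cut}}(\mathcal Z_T\times\mathcal Z_Y\mid d)=\int K_Y(\mathcal Z_Y\mid d,z_T)K_T(dz_T\mid D_T)=\int 1\,dK_T=1$. For countable additivity, take disjoint $A_n$. Their sections are disjoint, so $K_Y(\cup_n A_{n,z_T}\mid\cdot)=\sum_n K_Y(A_{n,z_T}\mid\cdot)$, and monotone convergence lets us exchange the sum with the $K_T$-integral.

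\textbf{Measurability in $d$.} Finally we show $d\mapsto K_{\mathrm{cut}}(A\mid d)$ is measurable. Since $f(d,z_T)=K_Y(A_{z_T}\mid d,z_T)$ is bounded and jointly measurable, it suffices to show $d\mapsto\int f(d,z_T)\,K_T(dz_T\mid D_T)$ is measurable for every such $f$. We will do this by the usual approximation: for $f=\mathbf 1_{E}$ with $E$ a measurable rectangle $E_1\times E_2$ in $\mathcal D\times\mathcal Z_T$, the integral is $\mathbf 1_{E_1}(d)K_T(E_2\mid D_T)$, which is measurable because $K_T$ is a kernel and $D_T$ is a measurable coordinate of $d$. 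We extend to all measurable $E$ by another $\pi$–$\lambda$ argument, then to simple functions by linearity, and to bounded measurable $f$ by monotone limits. The main obstacle is the joint-measurability step for non-rectangular $A$, together with this final integration lemma. Both are handled by monotone class arguments, so standard Borel spaces are not strictly needed beyond guaranteeing that the kernels are well defined.
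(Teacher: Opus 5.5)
Your proof is correct. It uses the same basic construction as the paper (the semidirect product of $K_T$ and $K_Y$), but you reach the full product $\sigma$-algebra by a different mechanism.

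The paper checks normalization, the bounds $[0,1]$, and measurability in $(D_T,D_O)$ only on measurable rectangles. It asserts countable additivity via Tonelli and then invokes ``extension from measurable rectangles to the product $\sigma$-algebra.'' Made rigorous, that step needs three ingredients, none of which is written out:
\begin{itemize}
\item Carath\'eodory's theorem for existence, with countable additivity checked on the semiring of rectangles;
\item $\pi$--$\lambda$ uniqueness;
\item a further monotone-class argument showing that $d\mapsto K_{\mathrm{cut}}(A\mid d)$ stays measurable for non-rectangular $A$.
\end{itemize}

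You instead define $K_{\mathrm{cut}}(A\mid d)$ for every measurable $A$ through the sections $A_{z_T}$, using implicitly the standard fact that these sections are measurable. This buys several things:
\begin{itemize}
\item existence needs no extension theorem;
\item countable additivity is a one-line monotone convergence argument;
\item both measurability claims come from explicit Dynkin arguments;
\item you prove that $d\mapsto\int f(d,z_T)\,K_T(dz_T\mid D_T)$ is measurable for bounded jointly measurable $f$, a lemma the paper uses without proof even on rectangles.
\end{itemize}

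The two constructions define the same kernel, since both are probability measures agreeing on the $\pi$-system of rectangles. The paper's version is shorter but is really a sketch; yours is self-contained. It also confirms your remark that the standard Borel assumption plays no role in this proposition.
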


\begin{proof}
First consider a measurable rectangle $A_T\times A_Y$. Since $K_Y$ is a Markov kernel,
\[
0\leq K_Y(A_Y\mid D_T,D_O,z_T)\leq 1,
\]
and therefore
\[
0\leq K_{\mathrm{cut}}(A_T\times A_Y\mid D_T,D_O)\leq 1.
\]
When $A_T=\mathcal{Z}_T$ and $A_Y=\mathcal{Z}_Y$,
\[
K_Y(\mathcal{Z}_Y\mid D_T,D_O,z_T)=1,
\]
\begin{align*}
K_{\mathrm{cut}}(\mathcal{Z}_T\times\mathcal{Z}_Y\mid D_T,D_O)
&=\int_{\mathcal{Z}_T}K_Y(\mathcal{Z}_Y\mid D_T,D_O,z_T)K_T(dz_T\mid D_T)\\
&=\int_{\mathcal{Z}_T}K_T(dz_T\mid D_T)=1.
\end{align*}
Countable additivity follows from the countable additivity of $K_T$ and $K_Y$, together with Tonelli's theorem. Because $(D_T,D_O,z_T)\mapsto K_Y(A_Y\mid D_T,D_O,z_T)$ is measurable and $K_T$ is itself a Markov kernel,
\[
(D_T,D_O)\mapsto\int_{A_T}K_Y(A_Y\mid D_T,D_O,z_T)K_T(dz_T\mid D_T)
\]
is measurable. Extension from measurable rectangles to the product $\sigma$-algebra establishes that $K_{\mathrm{cut}}$ is a valid Markov kernel.
\end{proof}

\subsubsection{Representation-level cutting feedback}

\begin{definition}[Treatment autonomy]
\label{def:treatment-autonomy}
A learning kernel satisfies treatment autonomy if, for every measurable set $A_T\subseteq\mathcal{Z}_T$,
\[
K(A_T\times\mathcal{Z}_Y\mid D_T,D_O)=K_T(A_T\mid D_T).
\]
Equivalently,
\[K_T(dz_T\mid D_T,D_O)=K_T(dz_T\mid D_T).
\]
\end{definition}

\begin{proposition}[MOCA satisfies treatment autonomy]
\label{prop:moca-autonomy}
If the MOCA learning kernel factorizes as in \eqref{eq:cut-kernel-density}, then
\[
K_{\mathrm{cut}}(A_T\times\mathcal{Z}_Y\mid D_T,D_O)=K_T(A_T\mid D_T).
\]
\end{proposition}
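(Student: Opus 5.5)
The plan is to evaluate the defining integral of the cut kernel on the rectangle $A_T\times\mathcal{Z}_Y$ and use the fact that the outcome kernel is a probability measure for every value of its conditioning arguments. Proposition~\ref{prop:cut-kernel-validity} already guarantees that $K_{\mathrm{cut}}(\cdot\mid D_T,D_O)$ is a well-defined probability measure on the product $\sigma$-algebra. The integrand $z_T\mapsto K_Y(A_Y\mid D_T,D_O,z_T)$ is measurable, so the integral is meaningful. Only the rectangle formula defining $K_{\mathrm{cut}}$ is needed.

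Concretely, I will take $A_Y=\mathcal{Z}_Y$ in the rectangle definition:
\[
K_{\mathrm{cut}}(A_T\times\mathcal{Z}_Y\mid D_T,D_O)=\int_{A_T}K_Y(\mathcal{Z}_Y\mid D_T,D_O,z_T)\,K_T(dz_T\mid D_T).
\]
Since $K_Y(\cdot\mid D_T,D_O,z_T)$ is a probability measure for every $z_T$, the integrand equals $1$ identically. This holds for every $z_T$, not merely almost everywhere, so no null-set issues arise. The right-hand side therefore reduces to $\int_{A_T}K_T(dz_T\mid D_T)=K_T(A_T\mid D_T)$, which is the claim.

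Finally, I will note the consequences. The left-hand side is, by definition, the $Z_T$-marginal of the cut kernel. The right-hand side depends on the data only through $D_T$. Hence the marginal does not depend on $D_O$, which is exactly treatment autonomy in the sense of Definition~\ref{def:treatment-autonomy}. The same argument yields the marginal identity in Proposition~\ref{prop:moca-treatment-autonomy}.

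I do not expect a genuine obstacle. The only point requiring care is that $K_Y$ is normalized for \emph{all} conditioning values, including all values of $D_O$. That is part of the standing assumption that $K_Y$ is a Markov kernel, so I will state it explicitly.
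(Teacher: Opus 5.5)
Your proposal is correct and follows the paper's own proof essentially line for line. You set $A_Y=\mathcal{Z}_Y$ in the rectangle formula, use $K_Y(\mathcal{Z}_Y\mid D_T,D_O,z_T)=1$ for every $z_T$, and integrate against $K_T(dz_T\mid D_T)$ to get $K_T(A_T\mid D_T)$, which does not depend on $D_O$.
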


\begin{proof}
Set $A_Y=\mathcal{Z}_Y$. Since $K_Y(\mathcal{Z}_Y\mid D_T,D_O,z_T)=1$,
\begin{align*}
K_{\mathrm{cut}}(A_T\times\mathcal{Z}_Y\mid D_T,D_O)
&=\int_{A_T}K_Y(\mathcal{Z}_Y\mid D_T,D_O,z_T)K_T(dz_T\mid D_T)\\
&=\int_{A_T}K_T(dz_T\mid D_T)\\
&=K_T(A_T\mid D_T).
\end{align*}
The marginal law of $Z_T$ therefore does not depend on $D_O$.
\end{proof}

This establishes representation-level cutting feedback. Outcome learning may use the treatment representation, but outcome-side optimization cannot alter that representation.

\subsubsection{Connection between MOCA training and the cut-kernel factorization}

MOCA's staged optimization procedure provides an algorithmic realization of the cut-kernel factorization. Let $\theta_T$ and $\theta_Y$ denote the parameters of the treatment and outcome modules, respectively. The treatment module is first learned exclusively from treatment-side information,
\[
\widehat{\theta}_T=\mathcal{A}_T(D_T,U_T),
\]
and is subsequently held fixed during outcome optimization,
\[
\widehat{\theta}_Y=\mathcal{A}_Y(D_T,D_O,\widehat{\theta}_T,U_Y).
\]
Consequently,
\[
\mathcal{L}(\widehat{\theta}_T\mid D_T,D_O)=\mathcal{L}(\widehat{\theta}_T\mid D_T),
\]
which implies
\[
K_T(dz_T\mid D_T,D_O)=K_T(dz_T\mid D_T).
\]
Thus, staged optimization with gradient detachment connects MOCA's practical training procedure to treatment autonomy in the cut-kernel formulation.

\subsubsection{Gaussian predictive kernels}

Fix the learned treatment representation and write
\[
Z_T=h_T(X),\qquad W=(X,Z_T).
\]
For treatment arm $t\in\{0,1\}$, denote the true conditional outcome law by
\[
P_t(dy\mid w)=\mathcal{L}(Y\mid W=w,T=t).
\]
Assume that $P_t(\cdot\mid w)$ admits a density $p_t(y\mid w)$ and has a finite second moment,
\[
\mathbb{E}(Y^2\mid W=w,T=t)<\infty.
\]
For any prediction function $f_t$, define the fixed-variance Gaussian predictive kernel
\[
Q_{f,t}(dy\mid w)=\mathcal{N}\!\left(dy;f_t(w),\sigma_t^2\right),
\qquad \sigma_t^2>0,
\]
with density
\[
q_{f,t}(y\mid w)
=
\frac{1}{\sqrt{2\pi\sigma_t^2}}
\exp\!\left[-\frac{\{y-f_t(w)\}^2}{2\sigma_t^2}\right].
\]
Define the expected conditional forward-KL risk as
\[
\mathcal{K}_t(f_t)
=
\mathbb{E}_{W\mid T=t}
\left[
\operatorname{KL}\{P_t(\cdot\mid W)\,\|\,Q_{f,t}(\cdot\mid W)\}
\right].
\]

\begin{theorem}[Gaussian predictive KL--MSE equivalence]
\label{thm:supp-kl-mse}
Assume that the expected conditional KL divergence is finite. Then there exists a constant $C_t$, independent of $f_t$, such that
\begin{equation}
\label{eq:kl-mse}
\mathcal{K}_t(f_t)
=
C_t+\frac{1}{2\sigma_t^2}
\mathbb{E}\!
\left[
\{Y-f_t(W)\}^2\mid T=t
\right].
\end{equation}
Therefore, for any function class $\mathcal{F}_t$,
\[
\arg\min_{f_t\in\mathcal{F}_t}\mathcal{K}_t(f_t)
=
\arg\min_{f_t\in\mathcal{F}_t}
\mathbb{E}\!\left[\{Y-f_t(W)\}^2\mid T=t\right].
\]
\end{theorem}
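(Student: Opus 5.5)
The plan is to write the conditional KL divergence as a negative entropy term plus a cross-entropy term, and then expand the Gaussian cross-entropy explicitly.

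For fixed $w$, the definition of KL with the density $p_t(\cdot\mid w)$ gives
\[
\operatorname{KL}\{P_t(\cdot\mid w)\,\|\,Q_{f,t}(\cdot\mid w)\}
=\int p_t(y\mid w)\log p_t(y\mid w)\,dy-\int p_t(y\mid w)\log q_{f,t}(y\mid w)\,dy .
\]
Substituting the Gaussian density turns the second integral into
\[
\tfrac12\log(2\pi\sigma_t^2)+\frac{1}{2\sigma_t^2}\,\mathbb{E}[\{Y-f_t(W)\}^2\mid W=w,T=t].
\]
Taking $\mathbb{E}_{W\mid T=t}$ and applying the tower property then yields
\[
C_t=\mathbb{E}_{W\mid T=t}\Big[\int p_t\log p_t\,dy\Big]+\tfrac12\log(2\pi\sigma_t^2),
\]
which involves only $P_t$ and $\sigma_t^2$. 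In particular, $C_t$ does not depend on $f_t$. This gives \eqref{eq:kl-mse}.

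The main obstacle is justifying the split, since we must rule out an expression of the form $\infty-\infty$. I would handle it as follows.

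\begin{enumerate}
\item \emph{The cross-entropy term is finite.} The finite second moment of $P_t(\cdot\mid w)$ makes the Gaussian cross-entropy finite at each $w$ where $f_t(w)$ is finite. Its expectation is finite whenever $\mathbb{E}[\{Y-f_t(W)\}^2\mid T=t]<\infty$, i.e.\ whenever $f_t$ is square integrable.
\item \emph{The negative-entropy term is then finite.} The assumed finiteness of the expected KL, together with step 1, forces the expected negative entropy to be finite. It is the difference of two finite quantities.
\item \emph{Non-square-integrable $f_t$.} If $\mathbb{E}[\{Y-f_t(W)\}^2\mid T=t]=\infty$, I would argue that both sides equal $+\infty$. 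This follows from the lower bound $\operatorname{KL}\ge$ (cross-entropy) $-$ (entropy bound). Alternatively, one can simply restrict attention to square-integrable $f_t$, where the expected KL is finite, as the hypothesis implicitly does.
\item \emph{Measurability.} The pointwise decomposition must be integrable in $w$. Measurability of $w\mapsto\int p_t\log p_t\,dy$ follows from the density being jointly measurable, which holds on standard Borel spaces.
\end{enumerate}

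The argmin statement follows immediately. The map $f_t\mapsto\mathcal K_t(f_t)$ is a positive affine transformation (slope $1/(2\sigma_t^2)>0$, intercept $C_t$) of the arm-specific MSE $R_t(f_t)$. Strictly increasing affine maps preserve minimizers over any class $\mathcal F_t$, so the two argmin sets coincide, including the case where either is empty.
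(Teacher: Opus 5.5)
Your proposal is correct and follows the paper's proof. You split the conditional KL into $\int p_t\log p_t$ plus the Gaussian cross-entropy, integrate over $W\mid T=t$ to obtain $C_t=\mathbb{E}\{C_t(W)\mid T=t\}$, and use $1/(2\sigma_t^2)>0$ for the argmin claim, exactly as the paper does. The finiteness and measurability bookkeeping you add (ruling out $\infty-\infty$, handling non-square-integrable $f_t$) is extra rigor the paper omits. For step 3, the simplest route is: once step 2 makes $\mathbb{E}_{W\mid T=t}\int p_t\log p_t\,dy$ finite, the identity holds in $(-\infty,\infty]$ for every $f_t$, because the cross-entropy is bounded below by $\tfrac12\log(2\pi\sigma_t^2)$, so no separate entropy bound is needed.
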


\begin{proof}
Conditional on $W=w$,
\begin{align*}
&\operatorname{KL}\{P_t(\cdot\mid w)\,\|\,Q_{f,t}(\cdot\mid w)\}\\
&\quad=\int p_t(y\mid w)\log\frac{p_t(y\mid w)}{q_{f,t}(y\mid w)}\,dy\\
&\quad=\int p_t(y\mid w)\log p_t(y\mid w)\,dy
-\int p_t(y\mid w)\log q_{f,t}(y\mid w)\,dy.
\end{align*}
The Gaussian negative log density is
\[
-\log q_{f,t}(y\mid w)
=
\frac{1}{2}\log(2\pi\sigma_t^2)
+
\frac{\{y-f_t(w)\}^2}{2\sigma_t^2}.
\]
Hence,
\begin{align*}
&\operatorname{KL}\{P_t(\cdot\mid w)\,\|\,Q_{f,t}(\cdot\mid w)\}\\
&\quad=C_t(w)
+
\frac{1}{2\sigma_t^2}
\mathbb{E}\!\left[\{Y-f_t(w)\}^2\mid W=w,T=t\right],
\end{align*}
where
\[
C_t(w)
=
\int p_t(y\mid w)\log p_t(y\mid w)\,dy
+
\frac{1}{2}\log(2\pi\sigma_t^2)
\]
does not depend on $f_t$. Taking expectation over $W\mid T=t$ gives \eqref{eq:kl-mse}, with $C_t=\mathbb{E}\{C_t(W)\mid T=t\}$. Since $1/(2\sigma_t^2)>0$, the KL and MSE objectives have identical minimizers.
\end{proof}

\subsubsection{Gaussian predictive interpretation of the MOCA outcome loss}

MOCA is trained using the squared-error loss $\{Y-f_t(W)\}^2$. For a fixed-variance Gaussian predictive model,
\[
-\log q_{f,t}(Y\mid W)
=
\frac{1}{2\sigma_t^2}\{Y-f_t(W)\}^2+\text{constant}.
\]
Therefore, the Gaussian predictive kernel is the most direct probabilistic representation of the MSE objective used by MOCA.

The ATE depends only on the two conditional mean functions,
\[
\psi=\mathbb{E}\{\mu_1(X)-\mu_0(X)\},
\qquad
\mu_t(X)=\mathbb{E}\{Y(t)\mid X\}.
\]
The location parameter of the Gaussian predictive family is the prediction function $f_t$, and its KL-optimal location parameter is the conditional mean. Thus, the Gaussian predictive interpretation is directly aligned with the quantity required for plug-in ATE estimation.

\subsection{MSE targets, best $L^2$ approximation, and ATE theory}
\label{subsec:mse-ate}

\subsubsection{Population masked MSE}

Fix $Z_T=h_T(X)$ and define $W=(X,Z_T)$. The population outcome loss of MOCA is
\[\mathcal{R}_Y(f_0,f_1)
=
\mathbb{E}
\left[
T\{Y-f_1(W)\}^2+(1-T)\{Y-f_0(W)\}^2
\right].
\]
Let $\pi_t=P(T=t)$. Then
\[
\mathcal{R}_Y(f_0,f_1)
=
\pi_1\mathcal{R}_1(f_1)+\pi_0\mathcal{R}_0(f_0),
\]
where
\[
\mathcal{R}_t(f_t)
=
\mathbb{E}\left[\{Y-f_t(W)\}^2\mid T=t\right].
\]
Because $f_0$ and $f_1$ appear in separate terms, the two treatment arms may be analyzed independently.

\subsubsection{Conditional-mean decomposition}

Define
\[
m_t(W)=\mathbb{E}(Y\mid W,T=t).
\]

\begin{theorem}[The conditional mean minimizes population MSE]
\label{thm:conditional-mean-mse}
For every square-integrable function $f_t$,
\begin{equation}
\label{eq:mse-decomposition}
\mathcal{R}_t(f_t)
=
\mathcal{R}_t(m_t)
+
\mathbb{E}\left[\{f_t(W)-m_t(W)\}^2\mid T=t\right].
\end{equation}
Therefore,
\[
m_t\in\arg\min_f\mathcal{R}_t(f)
\]
up to almost-sure equivalence.
\end{theorem}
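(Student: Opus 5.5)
The argument is the standard bias–variance (Pythagorean) decomposition of squared error around the conditional mean, carried out under the conditional law given \(T=t\). Fix \(t\) and a square-integrable \(f_t\), and expand
\[
\{Y-f_t(W)\}^2=\{Y-m_t(W)\}^2+2\{Y-m_t(W)\}\{m_t(W)-f_t(W)\}+\{m_t(W)-f_t(W)\}^2 .
\]
Taking conditional expectations given \(T=t\), the first term gives \(\mathcal R_t(m_t)\) and the last gives \(\mathbb E[\{f_t(W)-m_t(W)\}^2\mid T=t]\). The whole proof therefore reduces to showing that the cross term vanishes.

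\textbf{Cross term.} Condition further on \(W\) within the arm, using the tower property:
\[
\mathbb E\bigl[\{Y-m_t(W)\}\{m_t(W)-f_t(W)\}\mid T=t\bigr]
=\mathbb E\bigl[\{m_t(W)-f_t(W)\}\,\mathbb E\{Y-m_t(W)\mid W,T=t\}\mid T=t\bigr].
\]
The inner conditional expectation is zero by the definition \(m_t(W)=\mathbb E(Y\mid W,T=t)\).

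\textbf{Integrability (the only technical point).} The steps above need each term to be integrable. Since \(P_t(\cdot\mid W)\) has a finite second moment, \(Y\in L^2(P\mid T=t)\). By conditional Jensen, \(m_t(W)\) is also square integrable. With \(f_t\) square integrable, Cauchy--Schwarz makes the cross product integrable, which justifies pulling \(m_t-f_t\) out of the inner expectation. This also shows that \(\mathcal R_t(m_t)<\infty\), so the identity is not of the form \(\infty=\infty\).

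\textbf{Conclusion.} The identity \eqref{eq:mse-decomposition} shows \(\mathcal R_t(f_t)\ge \mathcal R_t(m_t)\), so \(m_t\) is a minimizer. Equality holds exactly when \(\mathbb E[\{f_t-m_t\}^2\mid T=t]=0\), that is, when \(f_t=m_t\) almost surely under \(P_{W\mid T=t}\). This gives uniqueness of the minimizer up to almost-sure equivalence, as stated.
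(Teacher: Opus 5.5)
Your argument is correct and is the same as the paper's proof: add and subtract $m_t(W)$, make the cross term vanish via $\mathbb{E}\{Y-m_t(W)\mid W,T=t\}=0$, and read off minimality and almost-sure uniqueness from the nonnegative remainder $\mathbb{E}[\{f_t(W)-m_t(W)\}^2\mid T=t]$. One small correction to your integrability step: the paper's standing assumption is only the pointwise bound $\mathbb{E}(Y^2\mid W=w,T=t)<\infty$, which does not imply $Y\in L^2(P\mid T=t)$. So $\mathcal{R}_t(m_t)<\infty$ should be stated as an extra assumption, and you need it for the uniqueness part; the paper's ordering, which establishes the identity conditionally on $W$ first and then integrates nonnegative terms, gives the decomposition in $[0,\infty]$ without it.
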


\begin{proof}
Write
\[
Y-f_t(W)=Y-m_t(W)+m_t(W)-f_t(W).
\]
After squaring and conditioning on $(W,T=t)$, the cross term vanishes because
\[
\mathbb{E}\{Y-m_t(W)\mid W,T=t\}=0.
\]
Thus,
\begin{align*}
&\mathbb{E}\left[\{Y-f_t(W)\}^2\mid W,T=t\right]\\
&\quad=
\mathbb{E}\left[\{Y-m_t(W)\}^2\mid W,T=t\right]
+
\{m_t(W)-f_t(W)\}^2.
\end{align*}
Taking expectation over $W\mid T=t$ yields \eqref{eq:mse-decomposition}. The second term is nonnegative and is zero if and only if $f_t(W)=m_t(W)$ almost surely.
\end{proof}

\subsubsection{Best $L^2$ approximation within the MOCA function class}

Let $\mathcal{F}_t$ denote the function class represented by the MOCA outcome head.

\begin{theorem}[Best $L^2$ approximation]
\label{thm:best-l2}
Suppose a minimizer exists. Then
\[
f_{t,\mathcal{F}}^*\in\arg\min_{f_t\in\mathcal{F}_t}\mathcal{R}_t(f_t)
\]
if and only if
\[
f_{t,\mathcal{F}}^*
\in
\arg\min_{f_t\in\mathcal{F}_t}
\|f_t-m_t\|_{L^2(P_{W\mid T=t})}^2.
\]
\end{theorem}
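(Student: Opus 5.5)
The plan is to reduce the statement to the conditional-mean decomposition in Theorem~\ref{thm:conditional-mean-mse}. That decomposition holds for every square-integrable $f_t$, and in particular for every $f_t\in\mathcal{F}_t$. Assuming (as is implicit) that the members of $\mathcal{F}_t$ are square integrable under $P_{W\mid T=t}$, we have, for each $f_t\in\mathcal{F}_t$,
\[
\mathcal{R}_t(f_t)=\mathcal{R}_t(m_t)+\|f_t-m_t\|_{L^2(P_{W\mid T=t})}^2 .
\]
The first term $\mathcal{R}_t(m_t)$ is a constant that does not depend on $f_t$. We also need it to be finite. This follows from the finite second-moment assumption on $P_t(\cdot\mid W)$, since $\mathcal{R}_t(m_t)=\mathbb{E}\{\operatorname{Var}(Y\mid W,T=t)\mid T=t\}\le \mathbb{E}(Y^2\mid T=t)<\infty$.

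With this in hand, the two objectives $f_t\mapsto\mathcal{R}_t(f_t)$ and $f_t\mapsto\|f_t-m_t\|^2_{L^2(P_{W\mid T=t})}$ differ on $\mathcal{F}_t$ by the same finite additive constant. Hence, for any $f,g\in\mathcal{F}_t$, we have $\mathcal{R}_t(f)\le\mathcal{R}_t(g)$ if and only if $\|f-m_t\|^2\le\|g-m_t\|^2$. I will prove the two directions explicitly.

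\emph{Forward direction.} Suppose $f^*_{t,\mathcal{F}}$ minimizes $\mathcal{R}_t$ over $\mathcal{F}_t$. Then for every $g\in\mathcal{F}_t$,
\[
\|f^*_{t,\mathcal{F}}-m_t\|^2=\mathcal{R}_t(f^*_{t,\mathcal{F}})-\mathcal{R}_t(m_t)\le \mathcal{R}_t(g)-\mathcal{R}_t(m_t)=\|g-m_t\|^2 ,
\]
so $f^*_{t,\mathcal{F}}$ also minimizes the $L^2$ distance to $m_t$.

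\emph{Reverse direction.} The same chain of inequalities, read with the roles of the two objectives exchanged, shows that an $L^2$ minimizer is also an $\mathcal{R}_t$ minimizer.

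The existence hypothesis ensures that the $\arg\min$ sets are nonempty, so the equivalence is not vacuous. The equivalence argument itself does not use existence.

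The only genuine subtlety is finiteness. Cancelling $\mathcal{R}_t(m_t)$ is legitimate only because it is finite, as established above. In addition, if some $f_t\in\mathcal{F}_t$ had infinite risk, both objectives would be $+\infty$ at that $f_t$ simultaneously, so the equivalence would still hold. No further obstacle is expected: this result is essentially a corollary of Theorem~\ref{thm:conditional-mean-mse}, obtained by restricting the decomposition to $\mathcal{F}_t$. Finally, the statement is understood up to $P_{W\mid T=t}$-almost-sure equality of functions.
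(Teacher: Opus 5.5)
Your proposal is correct and follows the paper's proof. Both apply the decomposition of Theorem~\ref{thm:conditional-mean-mse} on $\mathcal{F}_t$ and note that $\mathcal{R}_t(m_t)$ does not depend on $f_t$. Your explicit finiteness check is a real improvement that the paper omits: if $\mathcal{R}_t(m_t)=+\infty$, every $f_t\in\mathcal{F}_t$ would trivially minimize $\mathcal{R}_t$, and the equivalence could fail.

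One small correction: the paper's second-moment assumption is pointwise, $\mathbb{E}(Y^2\mid W=w,T=t)<\infty$. This does not imply $\mathbb{E}(Y^2\mid T=t)<\infty$, so the finiteness of $\mathcal{R}_t(m_t)$ should be stated as an additional, implicit integrability assumption rather than derived from the paper's hypothesis.
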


\begin{proof}
By Theorem~\ref{thm:conditional-mean-mse},
\[
\mathcal{R}_t(f_t)
=
\mathcal{R}_t(m_t)
+
\|f_t-m_t\|_{L^2(P_{W\mid T=t})}^2.
\]
The first term does not depend on $f_t$. Therefore, minimizing $\mathcal{R}_t(f_t)$ over $\mathcal{F}_t$ is equivalent to minimizing the squared $L^2$ distance from $m_t$.
\end{proof}

Thus, the statistically precise statement is that MOCA learns the best $L^2$ approximation to the conditional mean within its function class.

\subsubsection{Ideal ATE recovery}

Under consistency and conditional ignorability, and because $Z_T$ is a measurable function of $X$ while the outcome module retains $X$, the population MSE target satisfies
\[\mathbb{E}(Y\mid X,Z_T,T=t)
=
\mathbb{E}(Y\mid X,T=t)
=
\mathbb{E}\{Y(t)\mid X\}
=
\mu_t(X).
\]
Thus, the treatment-specific conditional means learned by MOCA coincide with the potential-outcome regression functions under the standard causal identification assumptions.

The true ATE is
\[
\psi=\mathbb{E}\{Y(1)-Y(0)\}.
\]
Under consistency, conditional ignorability, and positivity,
\[
\psi=\mathbb{E}_X\{\mu_1(X)-\mu_0(X)\}.
\]
For a pair of outcome functions $f=(f_0,f_1)$, define the population plug-in functional
\[
\psi_f
=
\mathbb{E}_X
\left[
 f_1\{X,h_T(X)\}-f_0\{X,h_T(X)\}
\right].
\]

\begin{theorem}[Ideal ATE recovery by the population MOCA outcome model]
\label{thm:ideal-ate}
Assume that:
\begin{enumerate}
\renewcommand{\labelenumi}{(\roman{enumi})}
\item consistency, conditional ignorability, and positivity hold;
\item the treatment representation satisfies $Z_T=h_T(X)$, and the outcome predictor conditions on $X$, possibly together with $Z_T$;
\item the true potential-outcome regression functions satisfy $\mu_t\in\mathcal{F}_t$ for $t\in\{0,1\}$;
\item the population masked MSE admits a minimizer; and
\item MOCA attains a population minimizer in each treatment arm.
\end{enumerate}
Then
\[
f_t^*(X,Z_T)=\mu_t(X),\qquad t\in\{0,1\},
\]
and consequently
\[
\psi_{\mathrm{MOCA}}^*
=
\mathbb{E}\{f_1^*(X,Z_T)-f_0^*(X,Z_T)\}
=
\psi.
\]
\end{theorem}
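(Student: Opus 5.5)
The plan is to combine the arm-wise decomposition of the masked risk with Theorem~\ref{thm:conditional-mean-mse} and the identification display preceding the statement. The argument has four steps.

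\emph{Step 1: reduce to each arm separately.} Since $\mathcal{R}_Y(f_0,f_1)=\pi_1\mathcal{R}_1(f_1)+\pi_0\mathcal{R}_0(f_0)$, and positivity gives $\pi_t>0$, a pair $(f_0^*,f_1^*)$ minimizes $\mathcal{R}_Y$ over $\mathcal{F}_0\times\mathcal{F}_1$ if and only if each $f_t^*$ minimizes $\mathcal{R}_t$ over $\mathcal{F}_t$. This justifies working one arm at a time, as assumption (v) already suggests.

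\emph{Step 2: identify the arm-specific target.} Because $Z_T=h_T(X)$ is a measurable function of $X$, the $\sigma$-field generated by $W=(X,Z_T)$ equals $\sigma(X)$. Hence $m_t(W)=\mathbb{E}(Y\mid X,T=t)$. Consistency replaces $Y$ by $Y(t)$ on $\{T=t\}$. Ignorability then yields $\mathbb{E}\{Y(t)\mid X,T=t\}=\mathbb{E}\{Y(t)\mid X\}=\mu_t(X)$. Positivity makes this conditioning well defined $P_X$-almost surely.

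\emph{Step 3: pin down the minimizer.} By assumption (iii), the map $(x,z)\mapsto\mu_t(x)$ lies in $\mathcal{F}_t$. Theorem~\ref{thm:conditional-mean-mse} (equivalently Theorem~\ref{thm:best-l2}) then shows that the minimum of $\mathcal{R}_t$ over $\mathcal{F}_t$ equals $\mathcal{R}_t(m_t)$ and is attained at $\mu_t$. For any minimizer $f_t^*$, the excess term $\|f_t^*-m_t\|^2_{L^2(P_{W\mid T=t})}$ must therefore vanish. So $f_t^*(W)=\mu_t(X)$ almost surely under $P_{W\mid T=t}$.

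\emph{Step 4: plug in.} Substituting gives $\psi^*_{\mathrm{MOCA}}=\mathbb{E}_X\{\mu_1(X)-\mu_0(X)\}$. By the identity preceding the statement, which relies on consistency, ignorability and positivity, this equals $\psi$.

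The main obstacle is the null-set issue between Steps 3 and 4. Equality holds only almost surely under $P_{X\mid T=t}$, whereas $\psi_f$ integrates against $P_X$. I would use positivity to argue that $P_X\ll P_{X\mid T=t}$: the density $dP_{X\mid T=t}/dP_X=p_t(X)/\pi_t$ is strictly positive, so $P_{X\mid T=t}$-null sets are $P_X$-null. The almost-sure equality therefore transfers to $P_X$, and the expectation is unaffected.
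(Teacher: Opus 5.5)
Your proposal is correct and follows essentially the same chain as the paper's proof: the arm-wise conditional-mean characterization via Theorem~\ref{thm:conditional-mean-mse}, the reduction $\mathbb{E}(Y\mid X,Z_T,T=t)=\mathbb{E}(Y\mid X,T=t)$ because $Z_T=h_T(X)$, consistency and ignorability to reach $\mu_t(X)$, and substitution into the plug-in functional. It is more careful than the paper's argument in two places: you make explicit where assumption (iii) is needed, whereas the paper equates the minimizer over $\mathcal{F}_t$ directly with the unrestricted conditional mean; and you use positivity to transfer the $P_{X\mid T=t}$-almost-sure equality to $P_X$, a step the paper leaves implicit.
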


\begin{proof}
By Theorem~\ref{thm:conditional-mean-mse},
\[
f_t^*(X,Z_T)=\mathbb{E}(Y\mid X,Z_T,T=t).
\]
Because $Z_T=h_T(X)$,
\[
\mathbb{E}(Y\mid X,Z_T,T=t)=\mathbb{E}(Y\mid X,T=t).
\]
By consistency and conditional ignorability,
\[
\mathbb{E}(Y\mid X,T=t)=\mathbb{E}\{Y(t)\mid X\}=\mu_t(X).
\]
Substitution into the plug-in functional gives
\begin{align*}
\psi_{\mathrm{MOCA}}^*
&=\mathbb{E}_X\{f_1^*(X,Z_T)-f_0^*(X,Z_T)\}\\
&=\mathbb{E}_X\{\mu_1(X)-\mu_0(X)\}\\
&=\psi.
\end{align*}
\end{proof}

\subsubsection{From arm-specific $L^2$ approximation to population ATE error}

Let
\[
e_t(X)=f_t\{X,h_T(X)\}-\mu_t(X)
\]
denote the approximation error of the treatment-specific outcome predictor. Then
\[
\psi_f-\psi=\mathbb{E}\{e_1(X)-e_0(X)\}.
\]
By the triangle inequality and Cauchy--Schwarz,
\[
|\psi_f-\psi|
\leq
\|e_1\|_{L^2(P_X)}+\|e_0\|_{L^2(P_X)}.
\]

\begin{proposition}[Population ATE error bound]
\label{prop:ate-error-bound}
The population plug-in ATE error satisfies
\begin{equation}
\label{eq:ate-error-l2}
|\psi_f-\psi|
\leq
\|e_1\|_{L^2(P_X)}+\|e_0\|_{L^2(P_X)},
\end{equation}
and consequently
\[
(\psi_f-\psi)^2
\leq
2\|e_1\|_{L^2(P_X)}^2+2\|e_0\|_{L^2(P_X)}^2.
\]
\end{proposition}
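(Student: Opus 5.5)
The plan is to reduce the plug-in error to the two arm-specific approximation errors and then bound each one in $L^2(P_X)$. By definition of $\psi_f$, and by the identification $\psi=\mathbb{E}_X\{\mu_1(X)-\mu_0(X)\}$ established before Theorem~\ref{thm:ideal-ate} (which uses consistency, ignorability and positivity), we have
\[
\psi_f-\psi=\mathbb{E}_X\{e_1(X)\}-\mathbb{E}_X\{e_0(X)\},
\]
where $e_t(X)=f_t\{X,h_T(X)\}-\mu_t(X)$. Here I assume that $f_t\{X,h_T(X)\}$ and $\mu_t(X)$ are square integrable under $P_X$, so that $e_t\in L^2(P_X)$ and all expectations are finite. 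If either norm is infinite, the bound holds trivially.

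First I apply the triangle inequality to get
\[
|\psi_f-\psi|\le|\mathbb{E}\,e_1(X)|+|\mathbb{E}\,e_0(X)|.
\]
Next I bound each term with Cauchy--Schwarz against the constant function $1$, which has unit norm because $P_X$ is a probability measure. This gives $|\mathbb{E}\,e_t(X)|\le\mathbb{E}|e_t(X)|\le\|e_t\|_{L^2(P_X)}$, which is Jensen's inequality in this form. Summing the two bounds yields \eqref{eq:ate-error-l2}.

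For the squared version, I square \eqref{eq:ate-error-l2} and use the elementary inequality $(a+b)^2\le 2a^2+2b^2$. This inequality is equivalent to $(a-b)^2\ge 0$, and I apply it with $a=\|e_1\|_{L^2(P_X)}$ and $b=\|e_0\|_{L^2(P_X)}$.

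There is no substantive obstacle. The only point requiring care is the measure under which the errors are taken. The bound must use the marginal $P_X$ over the whole population, not $P_{W\mid T=t}$, because $\psi$ averages over all units regardless of treatment. This is why the statement is phrased in $L^2(P_X)$. The later translation into excess arm-specific risk, via strong overlap and the density ratio $dP_{X\mid T=t}/dP_X=p_t(X)/\pi_t$, is a separate step and is not needed here.
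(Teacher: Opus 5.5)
Your proposal is correct and follows essentially the same route as the paper: write $\psi_f-\psi=\mathbb{E}\{e_1(X)-e_0(X)\}$, apply the triangle inequality and Cauchy--Schwarz (against the constant $1$ under the probability measure $P_X$), then use $(a+b)^2\le 2a^2+2b^2$ for the squared bound. Your added remarks on square integrability and on why the norm is taken under $P_X$ rather than $P_{W\mid T=t}$ are sound refinements; they do not change the argument.
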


Thus, small potential-outcome $L^2$ error implies small population ATE error. However, MOCA's masked MSE evaluates each outcome predictor under the corresponding treatment-arm distribution $P_{X\mid T=t}$, whereas the ATE is defined under the marginal target distribution $P_X$. To connect these distributions, let
\[
p_t(X)=P(T=t\mid X),
\qquad
\pi_t=P(T=t),
\]
and define the arm-specific excess MSE risk as
\[
\mathcal{E}_t(f_t)
=
\mathcal{R}_t(f_t)-\mathcal{R}_t(m_t)
=
\mathbb{E}\{e_t(X)^2\mid T=t\}.
\]
Assume strong overlap,
\[
p_t(X)\geq\varepsilon_t>0
\]
almost surely. Then
\begin{align*}
\pi_t\mathcal{E}_t(f_t)
&=P(T=t)\mathbb{E}\{e_t(X)^2\mid T=t\}\\
&=\mathbb{E}\{p_t(X)e_t(X)^2\}\\
&\geq\varepsilon_t\mathbb{E}\{e_t(X)^2\}.
\end{align*}
It follows that
\[
\|e_t\|_{L^2(P_X)}
\leq
\sqrt{\frac{\pi_t}{\varepsilon_t}}
\sqrt{\mathcal{E}_t(f_t)}.
\]
Substitution into \eqref{eq:ate-error-l2} gives
\[
|\psi_f-\psi|
\leq
\sum_{t=0}^1
\sqrt{\frac{\pi_t}{\varepsilon_t}}
\sqrt{\mathcal{E}_t(f_t)},
\]
and therefore
\[
(\psi_f-\psi)^2
\leq
2\sum_{t=0}^1
\frac{\pi_t}{\varepsilon_t}
\mathcal{E}_t(f_t).
\]
Hence, under strong overlap, MOCA's arm-specific excess MSE directly controls its population ATE error. The dependence on $\varepsilon_t$ also shows that this guarantee becomes weaker under poor overlap, because prediction accuracy within the observed treatment arms may not extend reliably to the full target population.

\subsubsection{Feedback-induced ATE distortion and risk decomposition}

Let the one-way cut solution produce treatment-specific outcome functions
\[
f_t^{\mathrm{cut}}(X),\qquad t\in\{0,1\}.
\]
Suppose that allowing two-way feedback changes the learned outcome functions to
\[
f_t^{\mathrm{fb}}(X)
=
f_t^{\mathrm{cut}}(X)+\Delta_t(X),
\]
where $\Delta_t(X)$ denotes the arm-specific prediction shift induced by feedback. The population plug-in ATEs under the cut and feedback solutions are
\[
\psi^{\mathrm{cut}}
=
\mathbb{E}\{f_1^{\mathrm{cut}}(X)-f_0^{\mathrm{cut}}(X)\}
\]
and
\[
\psi^{\mathrm{fb}}
=
\mathbb{E}\{f_1^{\mathrm{fb}}(X)-f_0^{\mathrm{fb}}(X)\}.
\]
Define the average feedback-induced distortion as
\[
\delta_{\mathrm{fb}}
=
\mathbb{E}\{\Delta_1(X)-\Delta_0(X)\}.
\]
Then
\[
\psi^{\mathrm{fb}}=\psi^{\mathrm{cut}}+\delta_{\mathrm{fb}}.
\]
Let
\[
b_{\mathrm{cut}}=\psi^{\mathrm{cut}}-\psi
\]
denote the population ATE bias of the cut solution. Define the squared population ATE errors as
\[
R_{\mathrm{cut}}=(\psi^{\mathrm{cut}}-\psi)^2,
\qquad
R_{\mathrm{fb}}=(\psi^{\mathrm{fb}}-\psi)^2.
\]

\begin{theorem}[Feedback risk decomposition]
\label{thm:feedback-risk}
The difference between the feedback and cut ATE errors is
\[R_{\mathrm{fb}}-R_{\mathrm{cut}}
=
2b_{\mathrm{cut}}\delta_{\mathrm{fb}}+\delta_{\mathrm{fb}}^2.
\]
\end{theorem}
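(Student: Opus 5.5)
The argument is a direct algebraic expansion. The plan has three steps.

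First, I will record the identity $\psi^{\mathrm{fb}}=\psi^{\mathrm{cut}}+\delta_{\mathrm{fb}}$, already noted just before the statement. To justify it, I use linearity of expectation applied to $f_t^{\mathrm{fb}}=f_t^{\mathrm{cut}}+\Delta_t$. This requires $f_t^{\mathrm{cut}}$ and $\Delta_t$ to be integrable, so that $\mathbb{E}\{f_1^{\mathrm{fb}}(X)-f_0^{\mathrm{fb}}(X)\}$ splits into the cut term plus $\mathbb{E}\{\Delta_1(X)-\Delta_0(X)\}$. I will state this integrability as the only (implicit) regularity condition; it holds, for example, whenever the outcome functions are square integrable, as in Theorem~\ref{thm:conditional-mean-mse}.

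Second, I will subtract $\psi$ and write
\[
\psi^{\mathrm{fb}}-\psi=(\psi^{\mathrm{cut}}-\psi)+\delta_{\mathrm{fb}}=b_{\mathrm{cut}}+\delta_{\mathrm{fb}}.
\]
Squaring gives
\[
R_{\mathrm{fb}}=b_{\mathrm{cut}}^2+2b_{\mathrm{cut}}\delta_{\mathrm{fb}}+\delta_{\mathrm{fb}}^2.
\]
Since $R_{\mathrm{cut}}=b_{\mathrm{cut}}^2$ by definition, subtracting it yields the claimed decomposition $2b_{\mathrm{cut}}\delta_{\mathrm{fb}}+\delta_{\mathrm{fb}}^2$.

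Third, I will note the corollary corresponding to Proposition~\ref{prop:feedback-distortion}. If $b_{\mathrm{cut}}=0$, for instance under the hypotheses of Theorem~\ref{thm:ideal-ate}, then the difference reduces to $\delta_{\mathrm{fb}}^2\ge 0$, with equality if and only if $\mathbb{E}\Delta_1=\mathbb{E}\Delta_0$.

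There is no real obstacle here. The only point needing care is making sure the expectations defining $\psi^{\mathrm{cut}}$ and $\delta_{\mathrm{fb}}$ are finite, so that the linearity step and the cancellation are legitimate.
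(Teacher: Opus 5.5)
Your proposal is correct and follows the paper's proof: both write $\psi^{\mathrm{fb}}-\psi=b_{\mathrm{cut}}+\delta_{\mathrm{fb}}$ and expand $(b_{\mathrm{cut}}+\delta_{\mathrm{fb}})^2-b_{\mathrm{cut}}^2$. Your added integrability remark makes explicit an assumption the paper leaves implicit in defining $\psi^{\mathrm{cut}}$ and $\delta_{\mathrm{fb}}$ as finite expectations.
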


\begin{proof}
Since $\psi^{\mathrm{fb}}-\psi=b_{\mathrm{cut}}+\delta_{\mathrm{fb}}$,
\begin{align*}
R_{\mathrm{fb}}-R_{\mathrm{cut}}
&=(b_{\mathrm{cut}}+\delta_{\mathrm{fb}})^2-b_{\mathrm{cut}}^2\\
&=2b_{\mathrm{cut}}\delta_{\mathrm{fb}}+\delta_{\mathrm{fb}}^2.
\end{align*}
\end{proof}

\begin{corollary}[Feedback applied to an ATE-correct cut solution]
\label{cor:feedback-correct-cut}
Suppose that $\psi^{\mathrm{cut}}=\psi$. Then $b_{\mathrm{cut}}=0$ and
\[
R_{\mathrm{fb}}-R_{\mathrm{cut}}
=
\delta_{\mathrm{fb}}^2
=
\left[
\mathbb{E}\{\Delta_1(X)-\Delta_0(X)\}
\right]^2
\geq 0.
\]
Therefore, additional outcome-to-treatment feedback cannot improve an already correct cut ATE solution. The feedback solution has the same ATE error only if
\[
\mathbb{E}\{\Delta_1(X)-\Delta_0(X)\}=0,
\]
in which case the average arm-specific perturbations cancel. Otherwise, $R_{\mathrm{fb}}>R_{\mathrm{cut}}$, and feedback introduces a strictly positive population ATE error.
\end{corollary}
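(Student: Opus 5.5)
The statement to prove is Corollary~\ref{cor:feedback-correct-cut}. My plan is to specialize Theorem~\ref{thm:feedback-risk} to the case $b_{\mathrm{cut}}=0$, and then read off the equality and strict-inequality cases from the sign of a square.

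First, the hypothesis $\psi^{\mathrm{cut}}=\psi$ gives $b_{\mathrm{cut}}=\psi^{\mathrm{cut}}-\psi=0$ directly from the definition. Consequently $R_{\mathrm{cut}}=b_{\mathrm{cut}}^2=0$.

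Next, I invoke Theorem~\ref{thm:feedback-risk}, which states $R_{\mathrm{fb}}-R_{\mathrm{cut}}=2b_{\mathrm{cut}}\delta_{\mathrm{fb}}+\delta_{\mathrm{fb}}^2$. With $b_{\mathrm{cut}}=0$ the cross term vanishes, leaving $\delta_{\mathrm{fb}}^2$. I then substitute the definition $\delta_{\mathrm{fb}}=\mathbb{E}\{\Delta_1(X)-\Delta_0(X)\}$. This step needs $\Delta_0$ and $\Delta_1$ to be $P_X$-integrable so that $\delta_{\mathrm{fb}}$ is a finite real number. That holds whenever $f_t^{\mathrm{cut}}$ and $f_t^{\mathrm{fb}}$ are integrable, since each $\Delta_t$ is their difference and these integrability conditions are already implicit in defining $\psi^{\mathrm{cut}}$ and $\psi^{\mathrm{fb}}$. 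Nonnegativity follows because $\delta_{\mathrm{fb}}$ is real, so its square is at least zero.

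For the equality characterization, $\delta_{\mathrm{fb}}^2=0$ holds exactly when $\delta_{\mathrm{fb}}=0$, that is, when $\mathbb{E}\{\Delta_1(X)\}=\mathbb{E}\{\Delta_0(X)\}$, which is the stated cancellation condition. In every other case $\delta_{\mathrm{fb}}^2>0$, so $R_{\mathrm{fb}}>R_{\mathrm{cut}}=0$. This gives the claim that feedback cannot improve an ATE-correct cut solution and strictly worsens it unless the average perturbations cancel.

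There is no substantive obstacle; the only point requiring care is the integrability remark above. I would also note explicitly that the corollary compares population ATE errors only; it asserts nothing about the two solutions' MSE or CATE errors.
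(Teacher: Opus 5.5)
Your proposal is correct and follows the paper's route: the paper gives no separate argument, because the corollary is Theorem~\ref{thm:feedback-risk} with $b_{\mathrm{cut}}=0$, which kills the cross term and leaves $R_{\mathrm{fb}}-R_{\mathrm{cut}}=\delta_{\mathrm{fb}}^2$, zero exactly when $\delta_{\mathrm{fb}}=0$. Your integrability remark is a harmless addition, though stronger than needed: only $\Delta_1-\Delta_0$ must be integrable, and that already follows from $\psi^{\mathrm{cut}}$ and $\psi^{\mathrm{fb}}$ being finite.
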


\subsection{Interpretation of the two theoretical components}

The Markov-kernel component explains how MOCA learns. It establishes
\[
K_{\mathrm{MOCA}}(dz_T,dz_Y\mid D_T,D_O)
=
K_T(dz_T\mid D_T)K_Y(dz_Y\mid D_T,D_O,z_T),
\]
Hence, MOCA is a one-way modular learner satisfying treatment autonomy. Conditional on this autonomous treatment representation, the Gaussian predictive result establishes
\[
\arg\min_{f_t}
\operatorname{KL}(P_t\,\|\,Q_{f,t})
=
\arg\min_{f_t}
\operatorname{MSE}(f_t)
\]
within the fixed-variance Gaussian predictive family. Thus, the MSE-trained MOCA outcome head induces a predictively KL-optimal Gaussian kernel within its model class.

The MSE component explains what MOCA learns. It establishes
\[
f_t^*=\mathbb{E}(Y\mid X,Z_T,T=t),
\]
and, under the causal identification assumptions,
\[
f_t^*=\mu_t(X).
\]
Within a restricted function class, MOCA learns the best $L^2$ approximation to the conditional mean. Under strong overlap,
\[
|\psi_f-\psi|
\leq
\sum_{t=0}^1
\sqrt{\frac{\pi_t}{\varepsilon_t}}
\sqrt{\mathcal{E}_t(f_t)},
\]
so arm-specific excess prediction risk explicitly controls population ATE error.

Finally, for an already correct cut solution,
\[
R_{\mathrm{fb}}-R_{\mathrm{cut}}
=
\left[
\mathbb{E}\{\Delta_1(X)-\Delta_0(X)\}
\right]^2.
\]
Therefore, additional feedback cannot improve an already correct cut ATE solution. It preserves the same ATE only when the average arm-specific perturbations cancel exactly; otherwise, it introduces strictly positive population ATE error.

\section{Conformal Inference for Individual Treatment Effects}

In addition to point estimates, we construct uncertainty intervals for individual treatment effects using the conformal counterfactual inference framework of \citet{lei2021conformal}. Let $X_i$ denote the observed covariates, $T_i\in\{0,1\}$ the treatment assignment, and $Y_i=Y_i(T_i)$ the observed outcome. The individual treatment effect is defined as \(\tau_i=Y_i(1)-Y_i(0)\).

Because only one potential outcome is observed for each unit, constructing an interval for $\tau_i$ reduces to constructing a prediction interval for the missing counterfactual outcome. For each treatment arm $t\in\{0,1\}$, the corresponding observations are divided into a proper training set and a calibration set.

The outcome model is fitted on the proper training set to obtain
\(\widehat{\mu}_t(x)\approx \mathbb{E}\bigl[Y(t)\mid X=x\bigr].\)
For each calibration observation $j$ with $T_j=t$, the absolute-residual conformity score is \(R_{j,t}=\left|Y_j-\widehat{\mu}_t(X_j)\right|\).

Standard split conformal prediction calibrates prediction intervals under the covariate distribution $X\mid T=t$. In counterfactual prediction, however, a model calibrated in one treatment arm is applied to units from the opposite arm. Weighted conformal calibration is therefore used to account for this covariate shift. Let \(e(x)=\mathbb{P}(T=1\mid X=x)\) denote the propensity score. Up to a multiplicative normalization constant, the density ratio weights are
\[
w_0(x)=\frac{e(x)}{1-e(x)}
\]
when predicting $Y(0)$ for treated units, and
\[
w_1(x)=\frac{1-e(x)}{e(x)}
\]
when predicting $Y(1)$ for control units. The same propensity score estimator, training calibration splits, and calibration procedure are used for MOCA and all baseline methods.

For a target unit with covariates $x$, let $q_{t,1-\alpha}(x)$ denote the weighted conformal $(1-\alpha)$ quantile of the calibration residuals in arm $t$. The resulting prediction interval for the missing potential outcome is
\[
\widehat{C}_t(x)
=
\left[
\widehat{\mu}_t(x)-q_{t,1-\alpha}(x),
\widehat{\mu}_t(x)+q_{t,1-\alpha}(x)
\right].
\]

The ITE interval is obtained by combining the observed factual outcome with the conformal interval for the missing counterfactual outcome. For a treated unit with $T_i=1$, suppose 
\(
\widehat{C}_0(X_i)
=
\left[\widehat{L}_{i,0},\widehat{U}_{i,0}\right].
\)
Because $\tau_i=Y_i-Y_i(0)$, the corresponding ITE interval is
\(
\widehat{C}_{\tau,i}
=
\left[
Y_i-\widehat{U}_{i,0},
Y_i-\widehat{L}_{i,0}
\right].
\) For a control unit with $T_i=0$, suppose 
\(
\widehat{C}_1(X_i)
=
\left[\widehat{L}_{i,1},\widehat{U}_{i,1}\right].
\)
Because $\tau_i=Y_i(1)-Y_i$, the corresponding ITE interval is 
\(
\widehat{C}_{\tau,i}
=
\left[
\widehat{L}_{i,1}-Y_i,
\widehat{U}_{i,1}-Y_i
\right].
\)

For simulated datasets in which the true ITEs are available, interval quality is evaluated using empirical coverage and average interval length. Empirical coverage is defined as
\[
\operatorname{Coverage}
=
\frac{1}{n_{\mathrm{test}}}
\sum_{i=1}^{n_{\mathrm{test}}}
\mathbb{I}\left\{
\tau_i\in\widehat{C}_{\tau,i}
\right\},
\]
and average interval length is defined as
\[
\operatorname{Length}
=
\frac{1}{n_{\mathrm{test}}}
\sum_{i=1}^{n_{\mathrm{test}}}
\left(
\widehat{U}_{\tau,i}-\widehat{L}_{\tau,i}
\right).
\]
Empirical coverage measures whether the intervals attain the nominal level $1-\alpha$, whereas average interval length measures their statistical efficiency.

\newpage
\section{Pseudocode}

\newcommand{\algstep}[1]{%
  \Statex \hspace{\algorithmicindent}\textbf{#1}
}

\begin{algorithm}[H]
\caption{MOCA: Train treatment module}
\label{alg:mocat}
\begingroup
\fontsize{10}{15}\selectfont
\renewcommand{\baselinestretch}{0.88}\selectfont

\setlength{\abovedisplayskip}{3pt}
\setlength{\belowdisplayskip}{3pt}
\setlength{\abovedisplayshortskip}{2pt}
\setlength{\belowdisplayshortskip}{2pt}

\begin{algorithmic}[1]
\Require Observed data $\mathcal{D}=\{(X_i,T_i)\}_{i=1}^n$
\Ensure Estimated propensity score $\widehat e_i$, treatment latent representation  \(H_{i}^{(T)} = (z_{i}^{(T)})^{\top} \in \mathbb{R}^{1 \times d}.
\)

\State Initialize treatment-module parameters $\theta_T$
\For{each training epoch}
    \For{each mini-batch $\mathcal{B}\subset \mathcal{D}$}

        \algstep{Step 1: Tokenization and Transformer Encoder}
        \State Tokenize covariates $X_i$ into scalar feature tokens.
        \State Construct treatment linear branch:
        \(H_i^{(L,T)}=\mathrm{Tok}^{(L,T)}(X_i).\)
        \State Construct treatment self-attention branch:
        \(H_i^{(S,T)}=\mathrm{Enc}_T\{\mathrm{Tok}^{(S,T)}(X_i)\}.\)
        \State Use treatment query $q^{(T)}$ to summarize both branches:
        \[
        z_i^{(L,T)}=\mathrm{MHA}(q^{(T)},H_i^{(L,T)},H_i^{(L,T)}),
        \]
        \[
        z_i^{(S,T)}=\mathrm{MHA}(q^{(T)},H_i^{(S,T)},H_i^{(S,T)}).
        \]

        \algstep{Step 2: Fusion gate}
        \State Compute treatment gate weights:
        \[
        \alpha_i^{(T)}
        =
        \mathrm{softmax}
        \left(
        \frac{
        g_i^{(T)}([z_i^{(L,T)};z_i^{(S,T)}])
        }{\tau_g}
        \right).
        \]
        \State Form gated treatment representation:
        \[
        z_i^{(T)}
        =
        \alpha_{iL}^{(T)}z_i^{(L,T)}
        +
        \alpha_{iS}^{(T)}z_i^{(S,T)}.
        \]
        \State Estimate propensity score:
        \[
        \widehat e_i
        =
        \sigma
        \left(
        P_T([\alpha_{iL}^{(T)}z_i^{(L,T)};
        \alpha_{iS}^{(T)}z_i^{(S,T)}])
        \right).
        \]
        \State Update $\theta_T$ by minimizing
        \[
        \mathcal{L}_T
        =
        -\frac{1}{|\mathcal{B}|}
        \sum_{i\in\mathcal{B}}
        \left[
        T_i\log \widehat e_i
        +
        (1-T_i)\log(1-\widehat e_i)
        \right].
        \]

    \EndFor
\EndFor
\end{algorithmic}
\endgroup
\end{algorithm}
\newpage
\begin{algorithm}[H]
\caption{MOCA: Train outcome module with cutting feedback}
\label{alg:mocao}
\begingroup
\fontsize{10}{15}\selectfont
\renewcommand{\baselinestretch}{0.88}\selectfont

\setlength{\abovedisplayskip}{3pt}
\setlength{\belowdisplayskip}{3pt}
\setlength{\abovedisplayshortskip}{2pt}
\setlength{\belowdisplayshortskip}{2pt}

\begin{algorithmic}[1]
\Require Observed data $\mathcal{D}=\{(X_i,Y_i,T_i)\}_{i=1}^n$
\Ensure Estimated potential outcomes $({\widehat{\mu}}_{0}(X_{i}),{\widehat{\mu}}_{1}(X_{i}))$

\State Initialize outcome-module parameters $\theta_Y$
\State Freeze treatment module parameters $\theta_T$.

\For{each training epoch}
    \For{each mini-batch $\mathcal{B}\subset \mathcal{D}$}
        \algstep{Step 1: Tokenization and Transformer Encoder}
        \State Obtain frozen treatment outputs:
        \[
        (\widehat e_i,H_i^{(T)})
        =
        \mathcal{F}_T(X_i;\theta_T).
        \]
        \State Apply cutting feedback:
        \[
        \widetilde H_i^{(T)}
        =
        \mathrm{stopgrad}(H_i^{(T)}),
        \qquad
        \widetilde e_i
        =
        \mathrm{stopgrad}(\widehat e_i).
        \]
        \State Construct outcome linear branch:
        \[
        H_i^{(L,Y)}=\mathrm{Tok}^{(L,Y)}(X_i).
        \]
        \State Construct outcome self-attention branch:
        \[
        H_i^{(S,Y)}=\mathrm{Enc}_Y\{\mathrm{Tok}^{(S,Y)}(X_i)\}.
        \]
        \algstep{Step 2: Arm-specific representation}
        \For{$t\in\{0,1\}$}
            \State Use arm-specific query $q^{(t)}$ to summarize outcome branches:
            \[
            z_i^{(L,t)}
            =
            \mathrm{MHA}(q^{(t)},H_i^{(L,Y)},H_i^{(L,Y)}),
            \]
            \[
            z_i^{(S,t)}
            =
            \mathrm{MHA}(q^{(t)},H_i^{(S,Y)},H_i^{(S,Y)}).
            \]
            \State Use one-way cross-attention from outcome to frozen treatment branch:
            \[
            z_i^{(T,t)}
            =
            \mathrm{MHA}(q^{(t)},\widetilde H_i^{(T)},\widetilde H_i^{(T)}).
            \]
            \algstep{Step 3: Fusion gate}
            \State Compute arm-specific fusion gate:
            \[
            \alpha_i^{(t)}
            =
            \mathrm{softmax}
            \left(
            \frac{
            g_i^{t}([z_i^{(L,t)};z_i^{(S,t)};z_i^{(T,t)}])
            }{\tau_g}
            \right).
            \]
            \State Form fused arm-specific representation:
            \[
            f_i^{(t)}
            =
            [
            \alpha_{iL}^{(t)}z_i^{(L,t)};
            \alpha_{iS}^{(t)}z_i^{(S,t)};
            \alpha_{iT}^{(t)}z_i^{(T,t)}
            ].
            \]
            \State Predict potential outcome:
            \[{\widehat{\mu}}_{t}(X_{i}) = W_{h2}^{(t)}\text{ }GELU(W_{h1}^{(t)}f_{i}^{(t)} + b_{h1}^{(t)}) + b_{h2}^{(t)},t \in \{ 0,1\}.\]
        \EndFor

        \State Update only $\theta_Y$ by minimizing factual outcome loss:
        \[
        \mathcal{L}_Y
        =
        \frac{1}{|\mathcal{B}_0|}
        \sum_{i\in\mathcal{B}_0}
        \{Y_i-\widehat\mu_0(X_i)\}^2
        +
        \frac{1}{|\mathcal{B}_1|}
        \sum_{i\in\mathcal{B}_1}
        \{Y_i-\widehat\mu_1(X_i)\}^2,
        \]
        \State where $\mathcal{B}_t=\{i\in\mathcal{B}:T_i=t\}$.
        \State Enforce no outcome-to-treatment feedback:
        \[
        \frac{\partial \mathcal{L}_Y}{\partial \theta_T}=0.
        \]
    \EndFor
\EndFor
\end{algorithmic}
\endgroup
\end{algorithm}
\newpage
\begin{algorithm}[H]
\caption{MOCA: Estimate treatment effects}
\label{alg:mocae}
\begingroup
\fontsize{10}{15}\selectfont
\renewcommand{\baselinestretch}{0.88}\selectfont

\setlength{\abovedisplayskip}{3pt}
\setlength{\belowdisplayskip}{3pt}
\setlength{\abovedisplayshortskip}{2pt}
\setlength{\belowdisplayshortskip}{2pt}

\begin{algorithmic}[1]
\Require Estimated potential outcomes $({\widehat{\mu}}_{0}(X_{i}),{\widehat{\mu}}_{1}(X_{i}))$
\Ensure Estimated CATE $\widehat{\tau}_i$ and ATE $\widehat{\tau}$

\For{$i=1,\ldots,n$}
    \State Estimate individual CATE:
    \[
    \widehat{\tau}_i
    =
    \widehat{\mu}_1(X_i)
    -
    \widehat{\mu}_0(X_i).
    \]
\EndFor

\State Estimate average treatment effect:
\[
\widehat{\tau}
=
\frac{1}{n}
\sum_{i=1}^n
\widehat{\tau}_i.
\]

\State \Return $\{\widehat{\tau}_i\}_{i=1}^n$ and $\widehat{\tau}$.

\end{algorithmic}
\endgroup
\end{algorithm}

\newpage
\section{Additional Tables and Figures}

\begin{table}[htbp]
\caption{Hyperparameter settings used in the simulation and real-data experiments.}
\label{tab:hyperparameters}
\centering
\resizebox{\textwidth}{!}{%
\begin{tabular}{lccccccc}
\toprule
Dataset &
Batch &
Epochs (T/O/NN) &
LR (T/O/NN) &
$\alpha$ &
$d_{\text{model}}$ &
Temp &
Trees \\
\midrule

Simulation &
128 &
40/60/100 &
1e-3/3e-4/1e-3 &
1 &
32 &
1 &
300 \\

Ablation &
128 &
40/60/-- &
1e-3/3e-4/-- &
1 &
32 &
1 &
-- \\

IHDP &
128 &
40/60/100 &
1e-3/3e-4/1e-3 &
1 &
32 &
1 &
300 \\

DW &
64 &
80/80/100 &
1e-3/3e-4/1e-3 &
1 &
32 &
1 &
300 \\
\bottomrule
\end{tabular}%
}
\vspace{0.2cm}

\begin{flushleft}
\footnotesize{
*Abbreviations: T = treatment module, O = outcome module, and NN = neural network baselines (including TARNet, DragonNet, and X-learner). LR denotes the corresponding learning rates. Trees denotes the number of trees used for BART and Causal Forest.
}
\end{flushleft}
\end{table}

\begingroup
\small
\setlength{\tabcolsep}{3pt}
\renewcommand{\arraystretch}{1.05}

\begin{longtable}{@{}lccc@{}}

\caption{Conformal inference results for individual treatment effects across simulation scenarios.}
\label{tab:ite-conformal-results}\\

\toprule
Method &
ITE Coverage &
Mean ITE Interval Length &
Median ITE Interval Length \\  
\midrule
\endfirsthead

\toprule
Method &
ITE Coverage &
Mean ITE Interval Length &
Median ITE Interval Length \\
\midrule
\endhead

\midrule
\multicolumn{4}{r}{Continued on next page} \\
\endfoot

\bottomrule
\endlastfoot

\multicolumn{4}{c}{\textbf{Linear}} \\
\midrule
X-Learner    & 0.923 & 10.799 & 11.779 \\
MOCA         & 0.957 & 11.321 & 12.885 \\
TARNet       & 0.912 & 7.008  & 7.335  \\
DragonNet    & 0.914 & 6.997  & 7.163  \\
BART         & 0.923 & 7.674  & 8.343  \\
CausalForest & 0.970 & 12.518 & 14.169 \\
Do\_PFN      & 0.866 & 7.390  & 7.572  \\

\midrule
\multicolumn{4}{c}{\textbf{Linear-Omis}} \\
\midrule
X-Learner    & 0.941 & 12.691 & 13.144 \\
MOCA         & 0.963 & 12.591 & 13.007 \\
TARNet       & 0.921 & 8.758  & 8.967  \\
DragonNet    & 0.919 & 8.678  & 8.808  \\
BART         & 0.947 & 8.876  & 8.833  \\
CausalForest & 0.970 & 13.352 & 13.895 \\
Do\_PFN      & 0.900 & 9.096  & 9.119  \\

\midrule
\multicolumn{4}{c}{\textbf{Non-linear}} \\
\midrule
X-Learner    & 0.953 & 11.159 & 10.059 \\
MOCA         & 0.980 & 13.734 & 12.442 \\
TARNet       & 0.929 & 7.672  & 7.207  \\
DragonNet    & 0.933 & 7.710  & 7.293  \\
BART         & 0.960 & 8.625  & 8.268  \\
CausalForest & 0.969 & 11.306 & 10.173 \\
Do\_PFN      & 0.960 & 13.266 & 12.115 \\

\midrule
\multicolumn{4}{c}{\textbf{\textit{t} Distribution}} \\
\midrule
X-Learner    & 0.943 & 20.852 & 25.674 \\
MOCA         & 0.952 & 21.138 & 26.921 \\
TARNet       & 0.898 & 9.154  & 10.234 \\
DragonNet    & 0.900 & 9.366  & 10.309 \\
BART         & 0.942 & 17.637 & 21.510 \\
CausalForest & 0.962 & 23.670 & 29.987 \\
Do\_PFN      & 0.876 & 18.123 & 18.426 \\

\midrule
\multicolumn{4}{c}{\textbf{Hidden Confounding}} \\
\midrule
X-Learner    & 0.912 & 16.639 & 17.620 \\
MOCA         & 0.952 & 17.349 & 19.355 \\
TARNet       & 0.890 & 12.632 & 13.436 \\
DragonNet    & 0.892 & 12.483 & 13.261 \\
BART         & 0.900 & 13.195 & 14.342 \\
CausalForest & 0.945 & 16.752 & 18.330 \\
Do\_PFN      & 0.922 & 14.147 & 14.841 \\

\midrule
\multicolumn{4}{c}{\textbf{Hidden Confounding MultiU}} \\
\midrule
X-Learner    & 0.896 & 17.430 & 17.426 \\
MOCA         & 0.940 & 18.283 & 18.919 \\
TARNet       & 0.897 & 14.360 & 14.295 \\
DragonNet    & 0.896 & 14.270 & 14.271 \\
BART         & 0.883 & 14.549 & 14.434 \\
CausalForest & 0.927 & 17.235 & 17.645 \\
Do\_PFN      & 0.927 & 15.375 & 15.390 \\

\midrule
\multicolumn{4}{c}{\textbf{High Dimension}} \\
\midrule
X-Learner    & 0.927 & 23.010 & 27.512 \\
MOCA         & 0.979 & 22.666 & 28.654 \\
TARNet       & 0.906 & 18.480 & 22.154 \\
DragonNet    & 0.903 & 18.527 & 22.162 \\
BART         & 0.886 & 19.159 & 23.289 \\
CausalForest & 0.939 & 19.952 & 24.466 \\
Do\_PFN      & 0.984 & 22.832 & 26.408 \\

\end{longtable}

\endgroup
\newpage
\setcounter{figure}{0}
\renewcommand{\thefigure}{S\arabic{figure}}

\begin{figure}[htbp]
    \centering
    \includegraphics[width=0.95\linewidth,trim=0 2 1 0,clip]{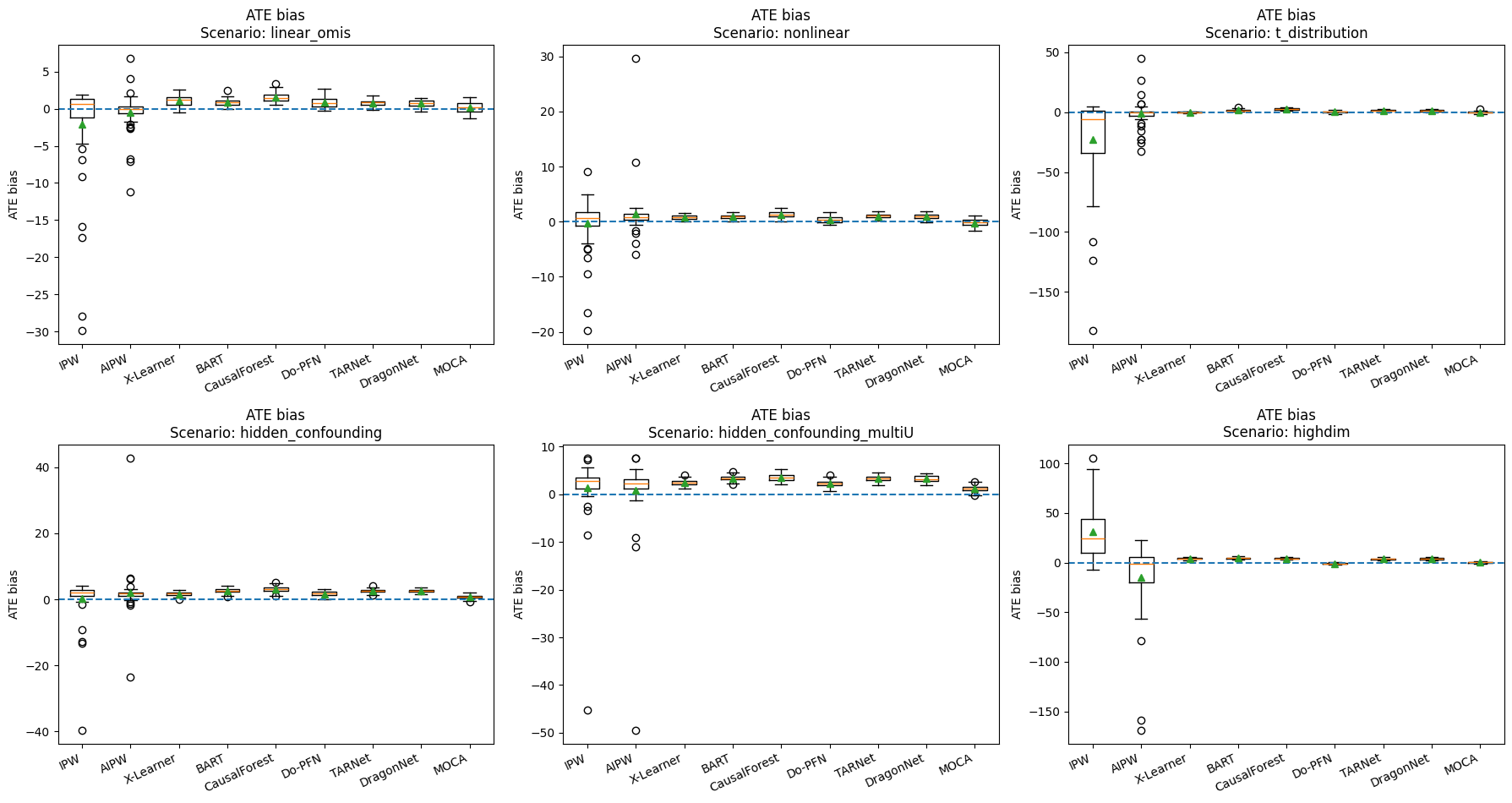}
    \caption{ATE bias boxplots for all simulation scenarios.}\label{fig:ate-bias}
\end{figure}

\begin{figure}[H]
    \centering
    \includegraphics[width=0.95\linewidth,trim=0 2 1 0,clip]{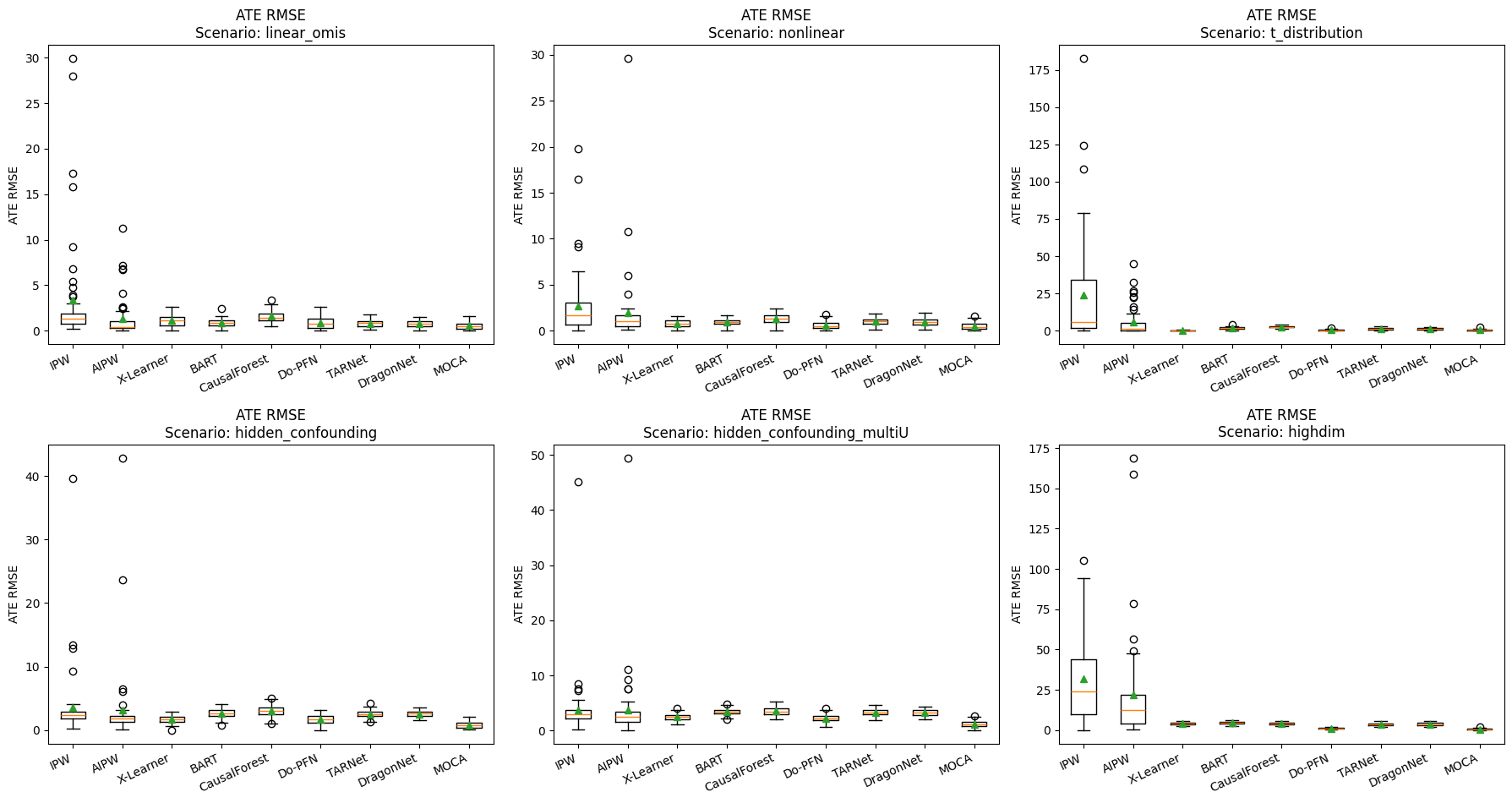}
    \caption{ATE RMSE boxplots for all simulation scenarios.}\label{fig:ate-rmse}
\end{figure}

\begin{figure}[H]
    \centering
    \includegraphics[width=0.95\linewidth,trim=0 2 1 0,clip]{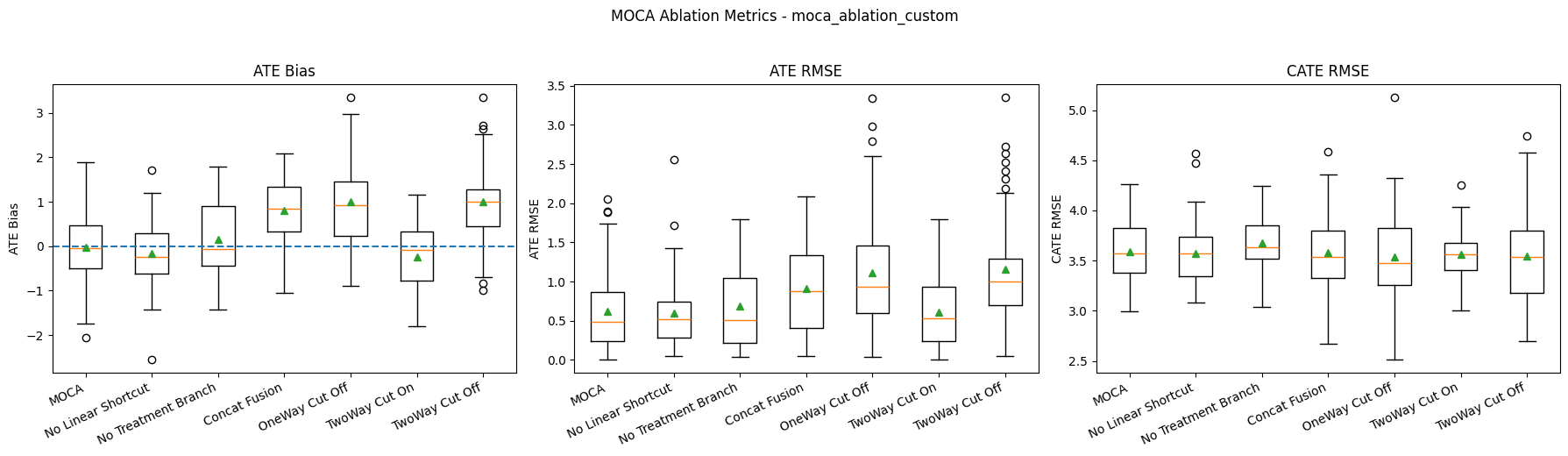}
    \caption{Summary boxplots for the ablation study.}\label{fig:ablation}
\end{figure}

\newpage

\clearpage
\begingroup
\small
\setlength{\bibsep}{2pt}
\linespread{0.92}\selectfont
\bibliographystyle{plainnat}
\bibliography{moca_refs_merged}

@article{rosenbaum1983,
  author = {Rosenbaum, Paul R. and Rubin, Donald B.},
  title = {The Central Role of the Propensity Score in Observational Studies for Causal Effects},
  journal = {Biometrika},
  year = {1983},
  volume = {70},
  number = {1},
  pages = {41--55}
}

@article{fritz2020synthetic,
  author  = {Fritz, Tobias},
  title   = {A Synthetic Approach to Markov Kernels, Conditional Independence and Theorems on Sufficient Statistics},
  journal = {Advances in Mathematics},
  volume  = {370},
  pages   = {107239},
  year    = {2020},
  doi     = {10.1016/j.aim.2020.107239}
}

@article{janzing2010causal,
  author  = {Janzing, Dominik and Sch{\"o}lkopf, Bernhard},
  title   = {Causal Inference Using the Algorithmic Markov Condition},
  journal = {IEEE Transactions on Information Theory},
  volume  = {56},
  number  = {10},
  pages   = {5168--5194},
  year    = {2010},
  month   = {October},
  doi     = {10.1109/TIT.2010.2060095}
}

@book{Kallenberg1997,
  author    = {Kallenberg, Olav},
  title     = {Foundations of Modern Probability},
  year      = {1997},
  edition   = {1},
  series    = {Probability and Its Applications},
  address   = {New York, NY},
  publisher = {Springer},
  doi       = {10.1007/b98838}
}

@article{mccandless2010,
  author  = {McCandless, Lawrence C. and Douglas, Ian J. and Evans, Stephen J. and Smeeth, Liam},
  title   = {Cutting Feedback in Bayesian Regression Adjustment for the Propensity Score},
  journal = {The International Journal of Biostatistics},
  year    = {2010},
  volume  = {6},
  number  = {2},
  note    = {Article 16},
  doi     = {10.2202/1557-4679.1205}
}

@article{zigler2013,
  author  = {Zigler, Corwin M. and Watts, Krista and Yeh, Robert W. and Wang, Yun and Coull, Brent A. and Dominici, Francesca},
  title   = {Model Feedback in Bayesian Propensity Score Estimation},
  journal = {Biometrics},
  year    = {2013},
  volume  = {69},
  number  = {1},
  pages   = {263--273},
  doi     = {10.1111/j.1541-0420.2012.01830.x}
}

@article{rubin2007,
  author  = {Rubin, Donald B.},
  title   = {The Design versus the Analysis of Observational Studies for Causal Effects: Parallels with the Design of Randomized Trials},
  journal = {Statistics in Medicine},
  year    = {2007},
  volume  = {26},
  number  = {1},
  pages   = {20--36},
  doi     = {10.1002/sim.2739}
}

@inproceedings{VowelsEtAl2021,
  author    = {Vowels, Matthew and Camgoz, Necati Cihan and Bowden, Richard},
  title     = {Targeted VAE: Variational and Targeted Learning for Causal Inference},
  booktitle = {2021 IEEE International Conference on Smart Data Services},
  pages     = {132--141},
  year      = {2021},
  publisher = {IEEE},
  doi       = {10.1109/SMDS53860.2021.00027}
}

@book{ImbensRubin2015,
author    = {Imbens, Guido W. and Rubin, Donald B.},
title     = {Causal Inference in Statistics, Social, and Biomedical Sciences},
year      = {2015},
address   = {Cambridge},
publisher = {Cambridge University Press}
}

@article{barnard1999,
  author  = {Barnard, John and Rubin, Donald B.},
  title   = {Small-sample degrees of freedom with multiple imputation},
  journal = {Biometrika},
  volume  = {86},
  number  = {4},
  pages   = {948--955},
  year    = {1999}
}

@article{dehejia1999,
  author = {Dehejia, Rajeev H. and Wahba, Sadek},
  title = {Causal Effects in Nonexperimental Studies: Reevaluating the Evaluation of Training Programs},
  journal = {Journal of the American Statistical Association},
  year = {1999},
  volume = {94},
  number = {448},
  pages = {1053--1062}
}

@article{robins2000,
  author = {Robins, James M. and Hernan, Miguel Angel and Brumback, Babette},
  title = {Marginal Structural Models and Causal Inference in Epidemiology},
  journal = {Epidemiology},
  year = {2000},
  volume = {11},
  number = {5},
  pages = {550--560}
}

@article{bang2005,
  author = {Bang, Heejung and Robins, James M.},
  title = {Doubly Robust Estimation in Missing Data and Causal Inference Models},
  journal = {Biometrics},
  year = {2005},
  volume = {61},
  number = {4},
  pages = {962--973}
}

@article{rubin2005,
  author = {Rubin, Donald B.},
  title = {Causal Inference Using Potential Outcomes: Design, Modeling, Decisions},
  journal = {Journal of the American Statistical Association},
  year = {2005},
  volume = {100},
  number = {469},
  pages = {322--331}
}

@article{bayarri2009,
  author = {Bayarri, Marıa Jesús and Berger, James O. and Liu, Fei},
  title = {Modularization in Bayesian Analysis, with Emphasis on Analysis of Computer Models},
  journal = {Bayesian Analysis},
  year = {2009}
}

@book{hastie2009,
  author = {Hastie, Trevor and Tibshirani, Robert and Friedman, Jerome},
  title = {The Elements of Statistical Learning: Data Mining, Inference, and Prediction},
  publisher = {Springer},
  year = {2009},
  edition = {2}
}

@article{vanderweele2011,
  author = {VanderWeele, Tyler J. and Arah, Onyebuchi A.},
  title = {Unmeasured Confounding for General Outcomes, Treatments, and Confounders: Bias Formulas for Sensitivity Analysis},
  journal = {Epidemiology},
  year = {2011},
  volume = {22},
  number = {1},
  pages = {42--52}
}

@inproceedings{johansson2016,
  author = {Johansson, Fredrik and Shalit, Uri and Sontag, David},
  title = {Learning Representations for Counterfactual Inference},
  booktitle = {Proceedings of the 33rd International Conference on Machine Learning},
  year = {2016},
  pages = {3020--3029}
}

@article{louizos2017,
  author = {Louizos, Christos and Shalit, Uri and Mooij, Joris and Sontag, David and Zemel, Richard and Welling, Max},
  title = {Causal Effect Inference with Deep Latent-Variable Models},
  journal = {Advances in Neural Information Processing Systems},
  year = {2017},
  volume = {30}
}

@article{vaswani2017,
  author = {Vaswani, Ashish and Shazeer, Noam and Parmar, Niki and Uszkoreit, Jakob and Jones, Llion and Gomez, Aidan N. and Kaiser, {
Lukasz} and Polosukhin, Illia},
  title = {Attention Is All You Need},
  journal = {Advances in Neural Information Processing Systems},
  year = {2017},
  volume = {30}
}

@article{iacus2019,
  author = {Iacus, Stefano M. and King, Gary and Porro, Giuseppe},
  title = {A Theory of Statistical Inference for Matching Methods in Causal Research},
  journal = {Political Analysis},
  year = {2019},
  volume = {27},
  number = {1},
  pages = {46--68}
}

@article{kunzel2019,
  author = {K{\"u}nzel, S{\"o}ren R. and Sekhon, Jasjeet S. and Bickel, Peter J. and Yu, Bin},
  title = {Metalearners for Estimating Heterogeneous Treatment Effects Using Machine Learning},
  journal = {Proceedings of the National Academy of Sciences},
  year = {2019},
  volume = {116},
  number = {10},
  pages = {4156--4165}
}

@inproceedings{lee2019,
  author = {Lee, Juho and Lee, Yoonho and Kim, Jungtaek and Kosiorek, Adam and Choi, Seungjin and Teh, Yee Whye},
  title = {Set Transformer: A Framework for Attention-Based Permutation-Invariant Neural Networks},
  booktitle = {Proceedings of the 36th International Conference on Machine Learning},
  year = {2019},
  pages = {3744--3753}
}

@article{li2019,
  author = {Li, Fan and Thomas, Laine E. and Li, Fan},
  title = {Addressing Extreme Propensity Scores via the Overlap Weights},
  journal = {American Journal of Epidemiology},
  year = {2019},
  volume = {188},
  number = {1},
  pages = {250--257}
}

@article{shi2019,
  author = {Shi, Claudia and Blei, David and Veitch, Victor},
  title = {Adapting Neural Networks for the Estimation of Treatment Effects},
  journal = {Advances in Neural Information Processing Systems},
  year = {2019},
  volume = {32}
}

@article{lei2021conformal,
  title={Conformal Inference of Counterfactuals and Individual Treatment Effects},
  author={Lei, Lihua and Cand{\`e}s, Emmanuel J.},
  journal={Journal of the Royal Statistical Society: Series B (Statistical Methodology)},
  volume={83},
  number={5},
  pages={911--938},
  year={2021},
  publisher={Oxford University Press}
}

@article{fan2021,
  author = {Fan, Jianqing and Wang, Weichen and Zhu, Ziwei},
  title = {A Shrinkage Principle for Heavy-Tailed Data: High-Dimensional Robust Low-Rank Matrix Recovery},
  journal = {The Annals of Statistics},
  year = {2021},
  volume = {49},
  number = {3},
  pages = {1239--1266}
}

@inproceedings{melnychuk2022,
  author = {Melnychuk, Valentyn and Frauen, Dennis and Feuerriegel, Stefan},
  title = {Causal Transformer for Estimating Counterfactual Outcomes},
  booktitle = {Proceedings of the 39th International Conference on Machine Learning},
  year = {2022},
  pages = {15293--15329}
}

@article{babiloni2023,
  author = {Babiloni, Francesca and Marras, Ioannis and Deng, Jiankang and Kokkinos, Filippos and Maggioni, Matteo and Chrysos, Grigorios and Torr, Philip and Zafeiriou, Stefanos},
  title = {Linear Complexity Self-Attention with 3rd Order Polynomials},
  journal = {IEEE Transactions on Pattern Analysis and Machine Intelligence},
  year = {2023},
  volume = {45},
  number = {11},
  pages = {12726--12737}
}

@article{frazier2025,
  author  = {Frazier, David T. and Nott, David J.},
  title   = {Cutting Feedback and Modularized Analyses in Generalized Bayesian Inference},
  journal = {Bayesian Analysis},
  year    = {2025},
  volume  = {20},
  number  = {4},
  pages   = {1647--1675}
}

@article{liu2025,
  author  = {Liu, Yang and Goudie, Robert JB},
  title   = {A General Framework for Cutting Feedback within Modularized Bayesian Inference},
  journal = {Journal of the Royal Statistical Society Series B: Statistical Methodology},
  year    = {2025},
  volume  = {87},
  number  = {4},
  pages   = {1171--1199}
}

@article{wang2025,
  author = {Wang, Aiwen and Piao, Xinglin and Zhang, Xinze and Guo, Yujia and Zhang, Yong},
  title = {RAT: Residual Attention Transformer for Tabular Data},
  journal = {IEEE Transactions on Big Data},
  year = {2025}
}

@article{chipman2010bart,
  title={BART: Bayesian Additive Regression Trees},
  author={Chipman, Hugh A. and George, Edward I. and McCulloch, Robert E.},
  journal={The Annals of Applied Statistics},
  volume={4},
  number={1},
  pages={266--298},
  year={2010}
}

@article{wager2018causalforest,
  title={Estimation and Inference of Heterogeneous Treatment Effects using Random Forests},
  author={Wager, Stefan and Athey, Susan},
  journal={Journal of the American Statistical Association},
  volume={113},
  number={523},
  pages={1228--1242},
  year={2018}
}

@article{robertson2026dopfn,
  title={Do-PFN: In-Context Learning for Causal Effect Estimation},
  author={Robertson, Jake and Reuter, Arik and Guo, Siyuan and Hollmann, Noah and Hutter, Frank and Sch{\"o}lkopf, Bernhard},
  journal={Advances in Neural Information Processing Systems},
  volume={38},
  pages={174811--174848},
  year={2025}
}
\endgroup

\end{document}